	\newcommand{\bs}{\boldsymbol}
	\theoremstyle{definition}
		\newtheorem{lemma}{Lemma}
		\newtheorem{assumption}{Assumption}
		\newtheorem{theorem}{Theorem}
		\newtheorem{remark}{Remark}
	\definecolor{colorhkust}{HTML}{142B8C}
	\definecolor{colorshanghaitech}{HTML}{A20005}
	\definecolor{colortsinghua}{HTML}{743481}
	\definecolor{colordark}{RGB}{184,134,11}
	\definecolor{colorRed}{RGB}{128, 0, 0}
	\definecolor{colorGreen}{RGB}{0, 64, 0}
	\definecolor{colorBlue}{RGB}{0, 0, 128}
\begin{document}

\title{Federated Dropout: Convergence Analysis and Resource Allocation}

\author
{Sijing Xie, Dingzhu Wen, Xiaonan Liu, Changsheng You, Tharmalingam Ratnarajah, Kaibin Huang 

    \thanks{S. Xie, and D. Wen are with the School of Information Science and Technology, ShanghaiTech University, Shanghai 201210, China (e-mail: \{xiesj2023, wendzh\}@shanghaitech.edu.cn). 
    }
    \thanks{X. Liu is with the School of Natural and Computing Science, University of Aberdeen, AB24 3FX Aberdeen, United Kingdom (email: xiaonan.liu@abdn.ac.uk).}
    \thanks{C. You is with the Department of Electronic and Electrical Engineering, Southern University of Science and Technology, Shenzhen 518055, China (e-mail: youcs@sustech.edu.cn).}
    \thanks{T. Ratnarajah is with the School of Engineering, The University of Edinburgh, EH8 9YL Edinburgh, United Kingdom (e-mail: t.ratnarajah@ed.ac.uk).}
    \thanks{K. Huang is with the Department of Electrical and Electronic Engineering, The University of Hong Kong, Hong Kong (e-mail: huangkb@eee.hku.hk).}

}

\maketitle
\IEEEpeerreviewmaketitle

\begin{abstract}
Federated Dropout is an efficient technique to overcome both communication and computation bottlenecks for deploying federated learning at the network edge. In each training round, an edge device only needs to update and transmit a sub-model, which is generated by the typical method of dropout in deep learning, and thus effectively reduces the per-round latency. \textcolor{blue}{However, the theoretical convergence analysis for Federated Dropout is still lacking in the literature, particularly regarding the quantitative influence of dropout rate on convergence}. To address this issue, by using the Taylor expansion method, we mathematically show that the gradient variance increases with a scaling factor of $\gamma/(1-\gamma)$, with $\gamma \in [0, \theta)$ denoting the dropout rate and $\theta$ being the maximum dropout rate ensuring the loss function reduction. Based on the above approximation, we provide the convergence analysis for Federated Dropout. Specifically, it is shown that a larger dropout rate of each device leads to a slower convergence rate. This provides a theoretical foundation for reducing the convergence latency by making a tradeoff between the per-round latency and the overall rounds till convergence. Moreover, a low-complexity algorithm is proposed to jointly optimize the dropout rate and the bandwidth allocation for minimizing the loss function in all rounds under a given per-round latency and limited network resources. 
Finally, numerical results are provided to verify the effectiveness of the proposed algorithm.
\end{abstract}

\begin{IEEEkeywords}
Federated edge learning, dropout, convergence analysis, communication and computational efficiency.
\end{IEEEkeywords}

\section{Introduction}
The explosion of mobile data in wireless networks drove the deployment of \textit{artificial intelligence} (AI) at the network edge \cite{zhu2023pushing,li2022deep,wen2023task, 10579852}. 
The integration of communication and AI has been one of the six usage scenarios of 6G by IMT-2030 \cite{IMT2030}. \textit{Federated edge learning} (FEEL) has emerged as a promising technique to realize edge AI by enabling distributed model training over edge devices while preserving data privacy \cite{shi2023task, Xu2023adaptive}.

In the existing literature, several categories of techniques have been proposed to facilitate the communication and energy efficient implementation of FEEL. Among the rest, the first technique aims at overcoming the server communication bottleneck caused by transmitting large-scale AI models, especially \textit{deep neural networks} (DNNs) \textcolor{blue}{and \textit{large language models} (LLMs)}. Typical communication-efficient FEEL schemes include \textcolor{blue}{partial gradient averaging \cite{10304494},} gradient or model compression \cite{wu2023fedcomp,xu2022adaptive}, resource management \cite{jiang2020joint,wadu2021joint,lim2021decentralized}, device scheduling \cite{ren2020scheduling,yang2019scheduling}, over-the-air computation \cite{cao2021optimized,mu2022federated,tegin2023federated}, and knowledge distillation \cite{10474173,liu2022communication}, etc. The second category of techniques aims to reduce the energy consumption of edge devices over multiple rounds via e.g., energy harvesting \cite{zeng2023federated}, and joint resource management via CPU-GPU computing coordination \cite{zeng2021energy}. Moreover, several new frameworks of FEEL have been proposed, e.g., decentralized FEEL \cite{liu2023communication,yang2023decentralized}, hierarchical FEEL \cite{chen2023enhanced,mhaisen2021optimal}\textcolor{blue}{, and personalized \textit{Federated Learning} (FL) \cite{setayesh2023perfedmask} }.

\textcolor{blue}{However, edge devices continue to face significant limitations in computational resources \cite{jiang2023computation, mcmahan2017communication}. Concurrently, the increasingly complex reality tasks arouse deep concern on huge parameter size of DNNs or LLMs, demanding substantial computational power. For instance, a ResNet-18 has more than 11 million parameters and entails more than 1.8G float-point operations (FLOPs). Such resource demands are often unsustainable for constrained platforms like embedded edge devices \cite{DBLP:conf/iclr/0022KDSG17}. This results in suboptimal model performance and unbearable time delay, hindering the swift deployment and practical application of AI models. Unfortunately, the above techniques are dedicated to enhancing communication and energy efficiency and do not essentially reduce the model complexity. As a consequence, the computational overhead is still high, as these methods necessitate updating the entire large-scale AI model during each training round, which may become the primary bottleneck of these approaches.} To tackle these challenges, model-pruning-based FL was proposed, e.g., PruneFL \cite{jiang2022model} and adaptive model pruning for hierarchical FEEL \cite{liu2023adaptive, 10678894}.
However, the communication overhead is still very large at the early stage of training. Moreover, this method only retains a sub-model after training, thus generally suffering from lower representation ability than the original model.  

To address the issues in model-pruning-based FL, a federated dropout (FedDrop) framework was proposed in \cite{FedDrop}. This framework made the first attempt to utilize the typical technique of dropout in deep learning \cite{srivastava2014dropout}. It generates a subset for each device during each round. \textcolor{blue}{FedDrop alleviates computational overhead by reducing the number of parameters and FLOPs, while also mitigating communication overhead by decreasing the size of data transmissions, making it particularly efficient in resource-constrained environments. Although different subnets are updated in each round of training, the original network structure remains intact after model convergence, thus maintaining the performance and functionality. Additionally, FedDrop provides a highly flexible and adjustable dropout rate to optimize performance across various applications. Moreover, FedDrop effectively mitigates the risk of overfitting by serving as a robust regularization technique, which enhances the model's generalization capabilities. Lastly, FedDrop can seamlessly integrate with other algorithms, e.g., split learning \cite{thapa2022splitfed}, to further improve learning performance while maintaining overall system efficiency.}

In addition, other types of subnet training frameworks include static-based \cite{horvath2021fjord}, rolling-based \cite{alam2022fedrolex}, and importance-based subnet generation schemes \cite{feng2023feddd,liu2021adaptive}. In the static-based scheme, each sub-model is always extracted from a designated part and remains the same over different rounds. Particularly, in \cite{horvath2021fjord}, the authors proposed to drop adjacent components of a neural network, since the order of parameters was important and should be kept. However, different parts of the global model are trained with different data distributions, thus suffering from degraded training quality. Further, to cover the whole global model, at least one device needs to train the whole model, so that the size of the global model is subject to the device with the largest model. A rolling sub-model extraction was introduced in \cite{alam2022fedrolex}, which enabled different parts of the model to be trained evenly with a rolling method. However, to finish a rolling window, it requires several communication rounds, with the same dropout rate in these rounds, thus failing to adapt to the time-varying communication and computation environments. On the other hand, importance-based schemes were proposed in \cite{feng2023feddd,liu2021adaptive} by characterizing the effects of different parameters in convergence or the model error after removing it. In particular, in \cite{liu2021adaptive}, the authors simply assumed that a subnet's gradient vector was an unbiased estimation of the original network, which however, has not been theoretically justified. \textcolor{blue}{To enhance generalization on non-independent and identically distributed (non-IID) datasets, \cite{tenisongradient} proposed a method analyzing the consistency of model update directions and employing a soft masking operation, which can be adapted as a drop-in replacement. However, it only concludes that the convergence rate is affected by the masking computation method, learning rate, and data distribution qualitatively.} Besides, in importance-based schemes, extra computation latency is incurred to determine the importance level of each weight. In summary, compared with other subnet training frameworks, FedDrop enjoys the following benefits. 
 \begin{itemize}
     \item {\bf Low design complexity:} FedDrop is simple in practical implementation, which requires no additional operations except dropout, and thus has low design and computational complexity.
     \item {\bf Versatile adaptiveness:} The dropout rate can be properly designed to adapt to devices' computational and communication capabilities, network resources, and wireless channel conditions.
     \item {\bf High model diversity:} FedDrop enjoys a high model diversity among different rounds and different devices due to randomness, namely, training diverse neuron groups in diverse architectures \cite{srivastava2014dropout}.
     \item {\bf Strong model robustness:} FedDrop increases the robustness of the model and removes any simple dependencies between the neurons.
 \end{itemize}
\textcolor{blue}{Despite the above advantages, the important and rigorous theoretical convergence analysis for FedDrop remains underexplored in the existing literature, which thus motivates the current work. The convergence proof of FedDrop can be generalized and extended to other algorithms as well.}

In this paper, we consider a FedDrop enabled wireless FL system, where an edge server is deployed to cooperatively train the FL model with multiple edge devices. We first provide the theoretical convergence analysis for FedDrop and then propose an efficient algorithm to jointly optimize the dropout rate and bandwidth allocation for efficient FedDrop. 
The main contributions are elaborated as follows.
\begin{itemize}
    \item {\bf Convergence analysis}: \textcolor{blue}{Based on the randomness,} by using the Taylor expansion of a subnet's gradient vector and not considering wireless channels in the analysis, we show that the gradient vector of a subnet generated by dropout can be approximated as a variance-bounded estimation of the original DNN's gradient vector. The gradient variance of a subnet is proportional to $\gamma/(1-\gamma)$ with $\gamma \in [0, \theta)$ being the dropout rate and $\theta$ being the maximum dropout rate ensuring the loss function reduction. Based on this approximation, theoretical analysis is conducted. It shows that dropout rates cannot be too large to guarantee that the loss function decreases after each training round. Furthermore, the convergence rate turns out to be slower with a higher dropout rate. 
    This provides a theoretical foundation to make a tradeoff between the per-round computation-and-communication efficiency and the overall rounds till convergence for minimizing the convergence latency. 
    
    
    \item {\bf Per-round loss function minimization}: Based on the theoretical convergence analysis for FedDrop, we characterize the learning loss reduction of an arbitrary round, which is shown to be dependent on the dropout rate of each device. Particularly, a higher dropout rate will lead to less decrease in the loss function. By leveraging the fact that both communication and computational overhead of a DNN is reduced to a ratio of $(1-\gamma)$ with a dropout of $\gamma$, we formulate an optimization problem to maximize the convergence rate by maximizing the learning loss reduction in each round. As the exact form of the learning loss reduction is intractable, its lower bound is maximized instead without loss of generality, under the constraints of limited system bandwidth, task completion latency, and devices' energy budgets. This design target compromises the per-round latency and the required number of rounds till convergence, as well as involving more resource-limited devices in FEEL for contributing their computational and data resources.
    \item {\bf Joint design of adaptive dropout rate and bandwidth allocation}: \textcolor{blue}{We convert the per-round loss function minimization problem to be convex and use Karush-Kuhn-Tucker (KKT) conditions to obtain closed-form solutions of dropout rate and bandwidth allocation.} The device with better uplink and downlink channel conditions should be allocated with a lower dropout rate and a smaller bandwidth. Moreover, the device’s bandwidth allocation ratio for each device \textcolor{blue}{shows the opposite trend} to its dropout rate. Based on these results, an optimal joint design of the dropout rate and bandwidth allocation is proposed with a low complexity of $O(K^2)$, where $K$ is the total number of devices.
    
    \item {\bf Performance evaluation}: To evaluate the performance of the proposed scheme, we conduct numerical experiments in two scenarios of underfitting and overfitting, \textcolor{blue}{corresponding to LeNet and AlexNet trained on the CIFAR-100 dataset.} The numerical results show that a smaller dropout rate leads to faster convergence, which verifies our theoretical analysis. In both scenarios, more network resources, i.e., a longer latency and a larger system bandwidth, allow lower dropout rates of all devices, leading to an improved testing performance. Besides, in the overfitting scenario, the proposed scheme outperforms the scheme without dropout, which is because dropout can avoid overfitting. 
    
    


\end{itemize}
The rest of the paper is organized as follows. Section II introduces the system model. The convergence analysis and problem formulation are presented in Section III, followed by the joint optimization of dropout rate and resource allocation in Section IV. In Section V, the simulation results are provided. Section VI concludes the paper.

\section{System Model}\label{sec:SystModel}
\subsection{Network Model}\label{subsec:netModel} 
A single-cell network is used for implementing the FedDrop framework, as illustrated in Fig. \ref{fig:network}. In the system, one edge server equipped with a single-antenna access point (AP) and $K$ single-antenna edge devices, denoted by $\mathcal{K}=\{1, 2,..., K \}$, cooperatively complete a FL via wireless links. In each round, each device first downloads a subnet from the server, then updates it using the local dataset by stochastic gradient descent (SGD), and finally uploads the updated subnet back to the server. The server collects all updated subnets for updating the global model. Without loss of generality, the channels are assumed to be frequency non-selective (see, e.g., \cite{wen2020joint}). Besides, the channels remain static in each round and vary over different rounds. The system bandwidth is denoted as $B$, which is divided into $K$ orthogonal sub-bands in each round. Similar to the Orthogonal Frequency Division Multiplexing (OFDM) system, different users are given different numbers of subcarriers to make sure there are no intra-cell interferences. Each sub-band is assigned to one device for downloading and uploading. The server is assumed to have the channel state information (CSI) of all devices' downlinks and uplinks by using effective channel estimation methods.
\begin{figure*}[hbtp]
    \centering   \includegraphics[width=0.8\linewidth]{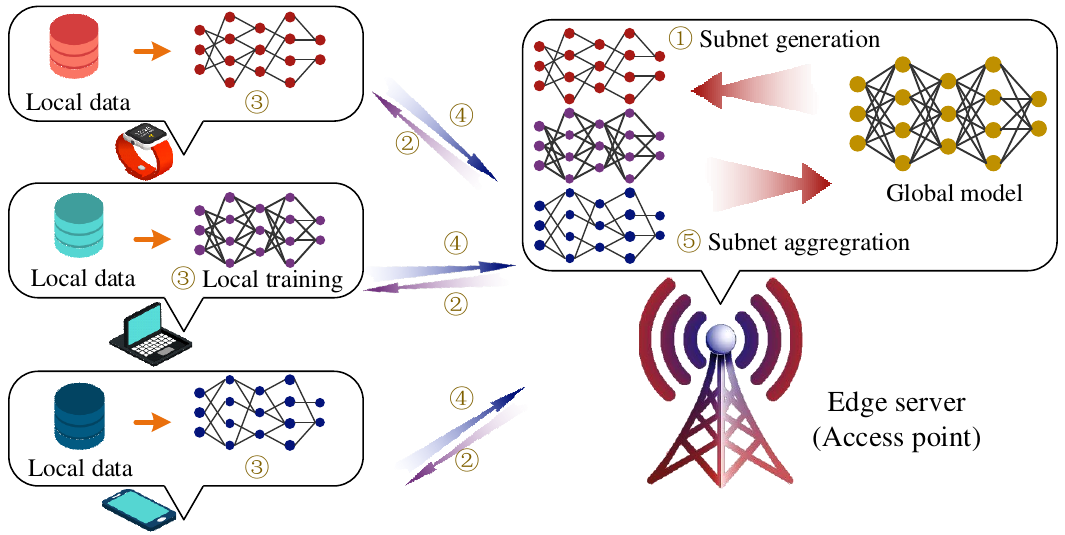}
    \caption{The operations of FL with FedDrop in a wireless system.}
    \label{fig:network}
\end{figure*}

\subsection{Federated Learning Model}
In the FedDrop system, each device $k$ holds a local dataset, denoted by $\mathcal{D}_k = \{\boldsymbol{x}_i | i=1,2,\ldots,|\mathcal{D}_k| \}$, where $\boldsymbol{x}_i$ is the $i$-th data sample and the size of $\mathcal{D}_k$ is $|\mathcal{D}_k|$. Then, the objective of the FL task is to minimize the global loss function:
\begin{equation}
    \mathcal{F}(\boldsymbol{w})=\sum_{k=1}^K \frac{\left|\mathcal{D}_k\right|}{|\mathcal{D}|} f_k\left(\boldsymbol{\hat{w}}_k ; \mathcal{D}_k\right),
\end{equation}
where $\mathcal{D} = \{\mathcal{D}_k\}$ is the global dataset, $\boldsymbol{\hat{w}}_k$ is the dropout generated subnet associated with the $k$-th device, $f_k\left(\boldsymbol{\hat{w}}_k; \mathcal{D}_k\right)$ is the local loss function of the $k$-th device given by
\begin{equation}
   f_k\left(\boldsymbol{\hat{w}}_k ; \mathcal{D}_k\right)=\frac{1}{|\mathcal{D}_k|}\sum_{\boldsymbol{x}_i\in\mathcal{D}_k}f\left(\boldsymbol{\hat{w}}_k ; \boldsymbol{x}_i\right),
\end{equation}
where $f(\cdot;\cdot)$ is the empirical loss function that characterizes the difference between the model output and real label via e.g., the cross entropy and mean square error (MSE). 

\subsection{Dropout}\label{Sect:Dropout}
We adopt the dropout technique for random subnet creation. The original dropout technique was proposed in \cite{srivastava2014dropout} to avoid overfitting of DNNs during training and could be applied to the weights, as well as the units \cite{baldi2013understanding}. \textcolor{blue}{In this paper, we focus exclusively on applying dropout to the weights of all layers, but our results also hold true for dropout applied to the units, with some adjustments.} In each round of FedDrop, the server generates a subnet for each device using dropout. Specifically, for device $k$ with a dropout rate of $\gamma_k \in [0,1)$, the dropout technique deactivates the weights of a DNN with a probability of $\gamma_k$. After the random dropout, the subnet is given by
\begin{equation}
    \begin{aligned}
       \hat{\boldsymbol{w}}_{k}  &=   \boldsymbol{w} \circ \boldsymbol{m}_{k} ,\;\forall k,   
    \end{aligned}
\end{equation}
where $\boldsymbol{w}$ is the original parameter vector without dropout, $\circ$ represents element-wise multiplication, and $\boldsymbol{m}_{k}$ is the Bernoulli dropout mask, the $j$-th element $m_{k,j}$ is defined as
\begin{equation}
\label{Eq:DropoutMask}
    m_{k,j}=\left\{
    \begin{array}{lll}
     \frac{1}{1-\gamma_{k}}, & \text { with probability of }\left(1-\gamma_{k}\right),  \\
     0, & \text { with probability of } \gamma_{k}. &
    \end{array}\right.
\end{equation}
Therein, the scaling factor $ \frac{1}{1-\gamma_{k}}$ in \eqref{Eq:DropoutMask} is used to guarantee that the expectation of the subnet weight is equal to the original network. As a result, the model with dropout equals the original one \cite{srivastava2014dropout}, i.e.,
$\mathbb{E}_{\bm{m}_{k}}\left(\hat{\boldsymbol{w}}_{k} \right)=\bm{w}$.

\subsection{FedDrop Framework}\label{subsec:Fedroppro} 

As shown in Fig. \ref{fig:network}, there are five steps in each training round of FedDrop \cite{FedDrop}, which are elaborated in the sequel.

\subsubsection{Generation (Subnets Generation)}
The server adopts the dropout technique introduced in Section \ref{Sect:Dropout} to generate a subnet for each device. In practice, a \textit{progressive random parametric pruning} approach proposed in \cite{FedDrop} is adopted for subnet generation. Specifically, for an arbitrary layer, we randomly and uniformly choose a weight and deactivate it, and repeat this step until meeting the requirement. 


\subsubsection{Push (Model Downloading)} Each device downloads its corresponding subnet from the server using a pre-assigned orthogonal frequency band.

\subsubsection{Computation (Local Model Updating)}
All devices update their corresponding subnets based on their local datasets utilizing SGD.

\subsubsection{Pull (Local Model Uploading)}
Each device uploads the updated local subnet model to the server by using the allocated bandwidth.

\subsubsection{Aggregation (Global Aggregation and Updating)}
In this step, a complete network is first constructed for each local subnet by zero padding, i.e., the dropped weights in the complete network are filled with zero. Then, all complete networks are aggregated for updating the global network.

The above five steps iterate until model convergence or reach the maximum round of iteration.

\subsection{Latency and Energy Consumption Models}
Consider an arbitrary communication round and an arbitrary device, say the $t$-th round and $k$-th device. The latency and energy consumption of these five steps in this round are modeled as follows.

\subsubsection{Generation Step}
The generation time at the server is the same for all devices and is much lower compared to other steps. No energy is consumed by devices in this step. 

\subsubsection{Push Step}
The latency of device $k$ to download its assigned subnet is
\begin{equation}\label{eqn:communicationdl_time}
    T_{k,t}^{\rm {com,dl}}=\frac{M_{k,t} Q}{r_{k,t}^{{\rm dl}}},
\end{equation}
where $Q$ is the quantization bits used for one parameter, $r_{k,t}^{\rm dl}$ is the downlink data rate of device $k$. \textcolor{blue}{$M_{k,t}$ is the number of parameters of the subnet, given as $M_{k,t}=(1-\gamma_{k,t})M_{\rm ori}$, where $\gamma_{k,t}$ is the dropout rate in this round, and $M_{\rm{ori}}$ is the number of all the parameters in the original network, respectively. We consider dropout in all the layers. Besides, the downlink data rate $r_{k,t}^{\rm dl}$ is}
\begin{equation}\label{equ:r_downlink}
    \begin{aligned}
    r_{k,t}^{\rm dl}
    &= \rho_{k,t}BR_{k,t}^{\rm dl}=\rho_{k,t}B\log_2 \left(1+\frac{|h_{k,t}^{\rm dl}|^2 P_k^{\rm{{com,dl}}}}{N_0} \right),
    \end{aligned}
\end{equation}
where $B$ is the total bandwidth, $\rho_{k,t}$ is the ratio of bandwidth assigned to device $k$, $R_{k,t}^{\rm dl}$ is the downlink spectrum efficiency, $N_0$ is the channel noise variance, $h_{k,t}^{\rm dl}$ is the downlink channel gain, and $P_k^{\rm{com,dl}}$ is the downlink transmit power of device $k$. The energy consumption in this step is to receive the model, which is included in the circuit energy consumption $\xi_k$.

\subsubsection{Computation Step}
In this step, the latency of device $k$ is given by 
\begin{equation}\label{eqn:computation_time}
    T_{k,t}^{\rm {cmp }}=\frac{C_{k,t}|\mathcal{D}_k|}{f_{k,t}},
\end{equation}
where $f_{k,t}$ (in cycle/s) is the CPU frequency of device $k$. \textcolor{blue}{$C_{k,t}$ is the number of processor operations to update the subnet using one data sample, given as $C_{k,t}=(1-\gamma_{k,t})C_{\rm {ori}}$, where $C_{\rm {ori}}$ is the number of processor operations for all the layers to update the original network.} Then, following \cite{you2017energy}, the energy consumption of device $k$ depends on $f_{k,t}$ and its latency $T_{k,t}^{\rm {cmp}}$, given by
\begin{equation}\label{eqn:computation_energy}
     E_{k,t}^{\rm {cmp}}=\Omega_kT_{k,t}^{\rm {cmp }} (f_{k,t})^3,
\end{equation}
where $\Omega_k$ is a constant characterizing the local computation performance of the processor on device $k$. 

\subsubsection{Pull Step}
The uploading latency of device $k$ is given as 
\begin{equation}\label{eqn:communicationul_time}
    T_{k,t}^{\rm {com,ul}}=\frac{M_{k,t} Q}{r_{k,t}^{\rm ul}},
\end{equation}
\textcolor{blue}{where $r_{k,t}^{\rm ul}$ is the uplink data rate given by}
\begin{equation} \label{equ:r_k,t}
    r_{k,t}^{\rm ul}= \rho_{k,t}B\log_2 \left(1+\frac{|h_{k,t}^{\rm ul}|^2P_{k,t}^{\rm{com,ul}}}{N_0} \right).
\end{equation}
In \eqref{equ:r_k,t}, \textcolor{blue}{$h_{k,t}^{\rm ul}$ is the uplink channel gain},  and other notations follow that in \eqref{equ:r_downlink}. 
Based on \eqref{equ:r_k,t}, the uplink transmit power is derived as
\begin{equation}\label{Eq:TransmitPower}
    P_{k,t}^{\rm{com,ul}}=\left(2^{\frac{r_{k,t}^{\rm ul}}{\rho_{k,t}B}}-1 \right)\frac{N_0}{|h_{k,t}^{\rm ul}|^2}.
\end{equation}
Then, the uploading energy consumption occurred in this step is given by 
\begin{equation} \label{eqn:communicationul_energy}
     \begin{aligned}
         E_{k,t}^{\rm{com,ul}}
         & =P_{k,t}^{\rm {com,ul}}T_{k,t}^{\rm {com,ul}}.
     \end{aligned}
\end{equation}

\subsubsection{Aggregation Step}
The global model aggregation time is the same for all devices and has no effect on the design, and is thus ignored. Besides, no energy is consumed by devices in this step.


In summary, the overall latency of device $k$ in this round is derived as
\begin{equation}\label{Eq:DeviceLatency}
    T_{k,t}=T_{k,t}^{\rm {com,dl}}+T_{k,t}^{\rm {cmp}}+T_{k,t}^{\rm {com,ul}},\forall k,
\end{equation}
where $T_{k,t}^{\rm {com,dl}}$ is the downlink communication latency defined in \eqref{eqn:communicationdl_time},
$T_{k,t}^{\rm {cmp}}$ is the computation latency defined in \eqref{eqn:computation_time}, and $T_{k,t}^{\rm {com,ul}}$ is the uplink communication latency defined in \eqref{eqn:communicationul_time}.\\
The total energy consumption of device $k$ in this round is 
\begin{equation}\label{Eq:DeviceEnergy}
    E_{k,t}=E_{k,t}^{\rm {com,ul}}+E_{k,t}^{\rm{cmp}}+\xi_k,\forall k,
\end{equation}
where $E_{k,t}^{\rm{cmp}}$ is the computation energy consumption in \eqref{eqn:computation_energy}, $E_{k,t}^{\rm{com,ul}}$ is the communication energy consumption in \eqref{eqn:communicationul_energy}, and $\xi_k$ is the circuit energy consumption for global model reception.

\textcolor{blue}{
\begin{remark}[Communication and Computational Overhead]
\textbf{Communication overhead:} FedDrop has an average upload and download cost of $32(1-\gamma_{k,t})M_{\rm ori}$ bits per client per round. PruneFL \cite{jiang2022model} requires clients to send full gradients to the server every $\Delta R$ rounds, leading to an average upload cost of $32(1-\gamma_{k,t})M_{\rm{ori}} +\frac{32}{\Delta R}M_{\rm{ori}} $. FedDST \cite{bibikar2022federated} also needs to upload and download masks if read-adjustments are executed, which once every $\Delta R$ rounds. Therefore, FedDST has an average upload and download overhead of $32(1-\gamma_{k,t})M_{\rm{ori}} +\frac{1}{\Delta R}M_{\rm{ori}} $.\\
\textbf{Computational overhead:} in all these three schemes, FLOPs for the $\ell$-th convolutional layer is $2J^2HWO_\ell O_{\ell-1}(1-\gamma_{k,t})$, where $O_{\ell-1}$ is the number of filters in the $(\ell-1)$-th layer, $J$ is the filter width, $H$ and $W$ are the height and width of the input feature maps; FLOPs for the $\ell$-th fully-connected (FC) layer is $2N_{\ell-1}N_\ell(1-\gamma_{k,t})$, where $N_{\ell-1}$ is the number of neurons in the $(\ell-1)$-th layer. The computational overhead is reduced to $(1-\gamma_{k,t})$ times the original amount, showing the same trend with $C_{k,t}$.
\end{remark}
}

\section{PROBLEM FORMULATION AND CONVERGENCE ANALYSIS}\label{sec:ConvrgAnalysis+ProblFrmlation}
In this section, we first show that the subnet's gradient vector can be approximated as a variance-bounded estimation of the original network's gradient vector. Based on this property, the convergence of FedDrop with an adaptive learning rate is characterized. Accordingly, a per-round learning loss reduction is obtained based on the theoretical analysis and an optimization problem is formulated to jointly design the adaptive dropout rate and bandwidth allocation for reducing the convergence latency.

\subsection{Variance-Bounded Gradient Estimation}
Let $\bm{\hat{g}}_k(\bm{\hat{w}}_{k}^{(t)})$  be the gradient vector of the subnet of device $k$ in an arbitrary round $t$.
\textcolor{blue}{Following \cite{wager2013dropout}, we adopt the Taylor expansion to approximate the gradients of a neural network. As such, the Taylor expansion of the local gradient vector with dropout at the reference point $\bm{w}^{(t)}$ is written as }
\begin{equation} \label{equ:tal}
    \begin{aligned}
     \bm{\hat{g}}_k(\bm{\hat{w}}_{k}^{(t)}) 
     & = 
     \bm{\tilde{g}}_k(\bm{w}^{(t)}) + O\left(\hat{\bm{w}}_k^{(t)}-\bm{w}^{(t)}\right)\\
     & +\bm{H}(\bm{w}^{(t)})\left(\hat{\bm{w}}_k^{(t)}-\bm{w}^{(t)}\right),
    \end{aligned}
\end{equation} 
where $ \bm{\tilde{g}}_k(\bm{w}^{(t)})$ is the stochastic gradient created by the local dataset of an arbitrary device, $\bm{H}(\bm{w}^{(t)})$ is the Hessian matrix of the loss function in terms of $\bm{w}^{(t)}$, and $O\left(\hat{\bm{w}}_k^{(t)}-\bm{w}^{(t)}\right)$ is the infinitesimal of higher order. 

Without loss of generality, the following assumptions are made for analyzing the property of $ \bm{\hat{g}}_k(\bm{\hat{w}}_{k}^{(t)})$, which are widely used in the existing FL literature \cite{castiglia2022compressed,jiang2023computation,liu2021adaptive}.
\begin{assumption}[Bounded Hessian] \label{assumption1} 
The F-norm of Hessian matrix $\bm{H}(\bm{w})$ is upper bounded, i.e.,
\begin{equation}
    \| \bm{H}(\bm{w}) \|_F^2 \leq A^2, \quad \forall \bm{w}.
\end{equation}
\end{assumption}

\begin{assumption}[Small Dropout Rate]
\label{assumption2}
The dropout rate of each device is small \footnote{The analysis is based on Assumption 2. However, the subsequent experimental results numerically show that even for the case with a relatively larger dropout rate, the convergence behavior is still observed.} so that
\begin{equation}
    \| \hat{\bm{w}}_k-\bm{w} \|^2 \rightarrow 0, \quad \forall \bm{w}.
\end{equation}
\end{assumption}

\begin{assumption}[Bounded weight]\label{assumption3}
The norm of the parameter vector is upper bounded, i.e.,
\begin{equation}
        \mathbb{E}\left[ \| \bm{w} \|^2 \right] \leq G^2, \quad \forall \bm{w}.
\end{equation}   
\end{assumption}
Assumption \ref{assumption2} requires that the dropout rates $\{\gamma_{k}\}$ should not be too large, otherwise, the federated training cannot converge. For example, if a dropout rate $\gamma_{k}$ approaches 1, almost all the neurons are dropped, leading to divergence without doubt. Under Assumption \ref{assumption2}, $O\left(\hat{\bm{w}}_k^{(t)}-\bm{w}^{(t)}\right)$ on the right side of \eqref{equ:tal} can be ignored, i.e.,
\begin{equation}
    \bm{\hat{g}}_k(\bm{\hat{w}}_{k}^{(t)})  \approx 
     \bm{\tilde{g}}_k(\bm{w}^{(t)})  +\bm{H}(\bm{w}^{(t)})\left(\hat{\bm{w}}_k^{(t)}-\bm{w}^{(t)}\right).
\end{equation}
Thereby, the following lemma is obtained.
\begin{lemma}\label{lem:expvar}
Under Assumptions \ref{assumption1}, \ref{assumption2}, and \ref{assumption3}, the gradient vector of a subnet is a variance-bounded estimation of the whole network's gradient vector:
\textcolor{blue}{
\begin{equation}\label{equ:expect}
    \mathbb{E}_{\bm{m}_{k}^{(t)}}\left[\hat{\bm{g}}_k (\hat{\bm{w}}^{(t)}_{k}) \right]=\Tilde{\bm{g}}_k (\bm{w}^{(t)}),
\end{equation}
\begin{equation}\label{equ:variance}
    \begin{aligned}
        \mathbb{D}_{\bm{m}_{k}^{(t)}}\left[\hat{\bm{g}}_k (\hat{\bm{w}}^{(t)}_{k}) \right]  & \leq \| \bm{H}(\bm{w}^{(t)}) \|_F^2 \cdot \| \bm{w}^{(t)} \|^2 \cdot \frac{\gamma_{k,t}}{1-\gamma_{k,t}}\\
        & \leq (AG)^2 \cdot \frac{\gamma_{k,t}}{1-\gamma_{k,t}}.
    \end{aligned}
\end{equation}
}
\end{lemma}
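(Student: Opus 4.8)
The plan is to compute the conditional expectation and variance of $\hat{\bm g}_k(\hat{\bm w}_k^{(t)})$ over the random mask $\bm m_k^{(t)}$, treating $\bm w^{(t)}$, the stochastic gradient $\tilde{\bm g}_k(\bm w^{(t)})$, and the Hessian $\bm H(\bm w^{(t)})$ as fixed (conditioned upon), and then invoke Assumptions \ref{assumption1} and \ref{assumption3} to bound the result. I start from the Taylor approximation
\begin{equation}\label{equ:proofstart}
  \hat{\bm g}_k(\hat{\bm w}_k^{(t)}) \approx \tilde{\bm g}_k(\bm w^{(t)}) + \bm H(\bm w^{(t)})\bigl(\hat{\bm w}_k^{(t)} - \bm w^{(t)}\bigr),
\end{equation}
and note that, since $\hat{\bm w}_k^{(t)} = \bm w^{(t)} \circ \bm m_k^{(t)}$ with $\mathbb{E}_{\bm m_k^{(t)}}[\bm m_k^{(t)}] = \bm 1$ (each entry has mean $1$ by \eqref{Eq:DropoutMask}), we get $\mathbb{E}_{\bm m_k^{(t)}}[\hat{\bm w}_k^{(t)} - \bm w^{(t)}] = \bm 0$. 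Linearity of expectation applied to \eqref{equ:proofstart} then yields \eqref{equ:expect} directly.

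For the variance \eqref{equ:variance}, I would write $\bm\delta_k := \hat{\bm w}_k^{(t)} - \bm w^{(t)} = \bm w^{(t)} \circ (\bm m_k^{(t)} - \bm 1)$, so that the only random term in \eqref{equ:proofstart} is $\bm H(\bm w^{(t)})\bm\delta_k$. Hence $\mathbb{D}_{\bm m_k^{(t)}}[\hat{\bm g}_k] = \mathbb{E}_{\bm m_k^{(t)}}\bigl[\|\bm H(\bm w^{(t)})\bm\delta_k\|^2\bigr]$. The key computation is the covariance structure of $\bm\delta_k$: the entries $m_{k,j}^{(t)}$ are independent across $j$, each with $\mathrm{Var}(m_{k,j}^{(t)}) = \frac{1}{1-\gamma_{k,t}} - 1 = \frac{\gamma_{k,t}}{1-\gamma_{k,t}}$ (a one-line Bernoulli variance calculation from \eqref{Eq:DropoutMask}), so $\mathbb{E}[\bm\delta_k \bm\delta_k^\top] = \frac{\gamma_{k,t}}{1-\gamma_{k,t}}\,\mathrm{diag}\bigl((w_j^{(t)})^2\bigr)$. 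Using the cyclic/trace identity $\mathbb{E}\|\bm H\bm\delta_k\|^2 = \mathrm{Tr}\bigl(\bm H^\top \bm H\, \mathbb{E}[\bm\delta_k\bm\delta_k^\top]\bigr)$ and bounding the resulting weighted sum by $\max_j (w_j^{(t)})^2 \le \|\bm w^{(t)}\|^2$ times $\mathrm{Tr}(\bm H^\top\bm H) = \|\bm H(\bm w^{(t)})\|_F^2$ gives the first inequality in \eqref{equ:variance}; applying Assumption \ref{assumption1} ($\|\bm H\|_F^2 \le A^2$) and Assumption \ref{assumption3} ($\mathbb{E}\|\bm w\|^2 \le G^2$) gives the second.

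The main obstacle I anticipate is making the trace bound $\sum_j (w_j^{(t)})^2 \sum_i H_{ij}^2 \le \|\bm w^{(t)}\|^2 \|\bm H\|_F^2$ rigorous rather than hand-wavy: strictly speaking one should note $\sum_j (w_j^{(t)})^2 [\bm H^\top\bm H]_{jj} \le \bigl(\max_j (w_j^{(t)})^2\bigr)\sum_j [\bm H^\top\bm H]_{jj}$, and then further relax $\max_j (w_j^{(t)})^2 \le \sum_j (w_j^{(t)})^2 = \|\bm w^{(t)}\|^2$. A second, more conceptual subtlety is the interplay of the two sources of randomness — the mask $\bm m_k^{(t)}$ and the SGD mini-batch underlying $\tilde{\bm g}_k$: here $\mathbb{D}_{\bm m_k^{(t)}}[\cdot]$ denotes variance over the mask \emph{only}, so $\tilde{\bm g}_k(\bm w^{(t)})$ and $\bm H(\bm w^{(t)})$ are constants under this expectation and drop out of the variance cleanly; I would state this conditioning explicitly at the outset to avoid confusion. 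Finally, because \eqref{equ:proofstart} is an approximation (the $O(\cdot)$ term was dropped via Assumption \ref{assumption2}), the equalities and inequalities in the lemma are understood to hold in the same approximate sense, which I would flag once.
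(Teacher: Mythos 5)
Your proposal is correct and follows essentially the same route as the paper's proof: Taylor-expand around $\bm{w}^{(t)}$, drop the higher-order term via Assumption \ref{assumption2}, use $\mathbb{E}_{\bm m_k^{(t)}}[\bm m_k^{(t)}]=\bm 1$ for unbiasedness, and use the per-coordinate mask variance $\gamma_{k,t}/(1-\gamma_{k,t})$ together with $\|\bm H\|_F$ for the variance bound. The only cosmetic difference is that you compute $\mathbb{E}[\bm\delta_k\bm\delta_k^\top]$ exactly and then relax via a trace inequality, whereas the paper applies $\|\bm H\bm\delta_k\|^2\le\|\bm H\|_F^2\|\bm\delta_k\|^2$ first and then evaluates $\mathbb{E}\|\bm\delta_k\|^2=\frac{\gamma_{k,t}}{1-\gamma_{k,t}}\|\bm w^{(t)}\|^2$; both yield the identical bound, and your explicit remarks on conditioning and the approximate sense of the equalities are, if anything, more careful than the paper.
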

\begin{proof}
Consider an arbitrary gradient vector of the subnet. The expectation of \eqref{equ:tal} is obtained as
\begin{equation}\label{Eq:UnbiasedElements} 
    \begin{aligned}
    & \mathbb{E}_{\textcolor{blue}{\bm{m}_{k}^{(t)}}}\left[\hat{\bm{g}}_k (\hat{\bm{w}}^{(t)}_{k}) \right] \\
    & = \bm{\tilde{g}}_k(\bm{w}^{(t)}) +\bm{H}(\bm{w}^{(t)}) \mathbb{E}_{\textcolor{blue}{\bm{m}_{k}^{(t)}}}\left[\left(\hat{\bm{w}}_k^{(t)}-\bm{w}^{(t)}\right) \right]\\
    & = \bm{\tilde{g}}_k(\bm{w}^{(t)}) +\bm{H}(\bm{w}^{(t)}) \mathbb{E}_{\textcolor{blue}{\bm{m}_{k}^{(t)}}}\left[\bm{w}^{(t)} \circ \left(\bm{m}_k^{(t)}- \bm{1}\right) \right] \\
    & = \bm{\tilde{g}}_k(\bm{w}^{(t)}),
    \end{aligned}
\end{equation}
Based on \eqref{Eq:UnbiasedElements}, the gradient variance 
is upper-bounded as 
\begin{equation} \label{equ:var_bou}
    \begin{aligned}
        & \mathbb{E}_{\textcolor{blue}{\bm{m}_{k}^{(t)}}}\left[ \left( \bm{\hat{g}}_k(\bm{\hat{w}}_{k}^{(t)})-\bm{\tilde{g}}_k(\bm{w}^{(t)}) \right)^2\right]\\
	& \leq \| \bm{H}(\bm{w}^{(t)})     \|_F^2 \cdot \mathbb{E}_{\textcolor{blue}{\bm{m}_{k}^{(t)}}} \left[        \| \hat{\bm{w}}_k^{(t)}-         \bm{w}^{(t)}\| ^2 \right]\\
        &= \| \bm{H}(\bm{w}^{(t)}) \|_F^2 \cdot \| \bm{w}^{(t)} \|^2 \cdot \frac{\gamma_{k,t}}{1-\gamma_{k,t}}.
    \end{aligned}
\end{equation}
Then, integrating Assumptions \ref{assumption1} and \ref{assumption3} into \eqref{equ:var_bou}, the whole stochastic gradient vector of the subnet is written as \eqref{equ:variance}. This ends the proof.
\end{proof} 

\subsection{Convergence Analysis}

As DNNs are non-convex, the $l_2$-norm of the gradient vector is used for convergence analysis, similar to \cite{ghadimi2013stochastic}. Without loss of generality, the following three assumptions are made (see e.g., \cite{li2019convergence,allen20172,cao2021optimized}).

\begin{assumption}[Bounded Loss Function]\label{assumption4}
	The loss function $F(\bm{w})$ is lower bounded,
	i.e., 
    \begin{equation}\label{eqn:bounded}
		F(\bm{w}) \geq \mathcal{F}^*, \quad \forall \bm{w}.
	\end{equation}
\end{assumption}
\begin{assumption}[Smoothness]\label{assumption5}
	The loss function $F(\boldsymbol{z})$ is $ L $-smooth,
	i.e., for $\forall \bs{v}, \bs{w} \in \operatorname{dom}(f)$,
	\begin{equation}\label{eqn:smooth}
		F(\bs{v}) \leq F(\bs{w})+(\bs{v}-\bs{w})^{\top} \nabla F(\bs{w})+\frac{L}{2}\|\bs{v}-\bs{w}\|^{2}.
	\end{equation} 
\end{assumption}
\begin{assumption}[Variance Bound]\label{assumption6} 
    It is assumed that 
    $\{\tilde{\mathbf{g}}_k(\bs{w}) \}$ are independent and unbiased estimates of the ground-truth gradient with coordinate bounded variance, i.e.,
    \begin{equation}
        \left\{\begin{array}{l}
        \mathbb{E}\left[\bm{\tilde{g}}_k(\bs{w})\right]=\bm{g}(\bs{w}), \forall k, \\
        \mathbb{E}\left[\left\|\tilde{\bm{g}}_k(\bs{w})-\bm{g}(\bs{w})\right\|^2\right] \leq \frac{\sigma^2}{\left|\mathcal{D}_k\right|}, \forall k.
      \end{array}\right.
    \end{equation}
    where $\bm{g}(\bs{w})$  is the ground-truth gradient vector and $\sigma^2$ is the per-sample gradient variance.
\end{assumption}
Note that $\bs{w}$ in the above three assumptions represents the parameter vector of the whole network. 


\begin{lemma}\label{lem:adaptive}
Given learning rate $\eta= \frac{1}{3\sqrt{T}L}$. Based on Assumptions \ref{assumption4}--\ref{assumption6} and Lemma \ref{lem:expvar}, the loss function difference after an arbitrary round of FedDrop is 
\begin{equation}  \label{equ:lemma_opt}
    \begin{aligned}
        & \mathbb{E} \left[\mathcal{F}\left(\bm{w}^{(t+1)}\right)-\mathcal{F}\left(\bm{w}^{(t)}\right) \right] \\
        & \leq \left( -\frac{1}{3 \sqrt{T}L} +\frac{2}{9TL} \right) \|\bm{g} (\bm{w}^{(t)}) \|^2 + \frac{2K}{9TL} \frac{\sigma^2}{|\mathcal{D}|} \\
        &\quad  +\frac{A^2G^2}{9TL} \sum_{k=1}^K \frac{|\mathcal{D}_k|}{|\mathcal{D}|} \cdot \frac{\gamma_{k,t}}{1-\gamma_{k,t}},\\
    \end{aligned}
\end{equation}
where $T$ is the number of global rounds.
\end{lemma}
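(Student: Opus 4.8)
The plan is to start from the $L$-smoothness inequality (Assumption~\ref{assumption5}) applied to the two consecutive iterates $\bm{w}^{(t+1)}$ and $\bm{w}^{(t)}$, and then carefully take expectations over the randomness in both the dropout masks $\{\bm{m}_k^{(t)}\}$ and the stochastic gradients $\{\tilde{\bm{g}}_k(\bm{w}^{(t)})\}$. Writing the aggregation step as $\bm{w}^{(t+1)} = \bm{w}^{(t)} - \eta \sum_{k} \frac{|\mathcal{D}_k|}{|\mathcal{D}|} \hat{\bm{g}}_k(\hat{\bm{w}}_k^{(t)})$, the smoothness bound gives
\begin{equation*}
    \mathcal{F}(\bm{w}^{(t+1)}) \leq \mathcal{F}(\bm{w}^{(t)}) - \eta \langle \nabla \mathcal{F}(\bm{w}^{(t)}), \textstyle\sum_k \tfrac{|\mathcal{D}_k|}{|\mathcal{D}|}\hat{\bm{g}}_k \rangle + \frac{L\eta^2}{2}\big\| \textstyle\sum_k \tfrac{|\mathcal{D}_k|}{|\mathcal{D}|}\hat{\bm{g}}_k \big\|^2.
\end{equation*}
First I would handle the inner-product (cross) term: by Lemma~\ref{lem:expvar} each $\hat{\bm{g}}_k$ is (conditionally) unbiased for $\tilde{\bm{g}}_k(\bm{w}^{(t)})$, and by Assumption~\ref{assumption6} each $\tilde{\bm{g}}_k$ is unbiased for $\bm{g}(\bm{w}^{(t)})=\nabla\mathcal{F}(\bm{w}^{(t)})$; the convex weights $\frac{|\mathcal{D}_k|}{|\mathcal{D}|}$ sum to one, so the conditional expectation of the aggregated estimator is exactly $\bm{g}(\bm{w}^{(t)})$, making the cross term equal to $-\eta\|\bm{g}(\bm{w}^{(t)})\|^2$.

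Next I would bound the quadratic term $\mathbb{E}\big\|\sum_k \tfrac{|\mathcal{D}_k|}{|\mathcal{D}|}\hat{\bm{g}}_k\big\|^2$ by splitting it as $\|\bm{g}(\bm{w}^{(t)})\|^2$ plus the variance of the aggregated estimator. The variance decomposes via a tower/law-of-total-variance argument into two contributions: (i) the dropout-induced variance, which by \eqref{equ:variance} in Lemma~\ref{lem:expvar} is at most $(AG)^2\frac{\gamma_{k,t}}{1-\gamma_{k,t}}$ per device, and (ii) the SGD sampling variance, which by Assumption~\ref{assumption6} is at most $\frac{\sigma^2}{|\mathcal{D}_k|}$ per device. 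Using independence across devices (Assumption~\ref{assumption6}) to kill the cross terms, the variance of $\sum_k \tfrac{|\mathcal{D}_k|}{|\mathcal{D}|}(\cdot)$ becomes $\sum_k \big(\tfrac{|\mathcal{D}_k|}{|\mathcal{D}|}\big)^2$ times the per-device bounds. I would then loosen $\big(\tfrac{|\mathcal{D}_k|}{|\mathcal{D}|}\big)^2 \leq \tfrac{|\mathcal{D}_k|}{|\mathcal{D}|}$ for the dropout term (yielding $\sum_k \tfrac{|\mathcal{D}_k|}{|\mathcal{D}|}\tfrac{\gamma_{k,t}}{1-\gamma_{k,t}}$) and $\sum_k \big(\tfrac{|\mathcal{D}_k|}{|\mathcal{D}|}\big)^2 \leq \sum_k \tfrac{|\mathcal{D}_k|}{|\mathcal{D}|}\cdot 1 \leq K\cdot\big(\tfrac{\max_k|\mathcal{D}_k|}{|\mathcal{D}|}\big)$-type simplification, most cleanly bounding $\sum_k (|\mathcal{D}_k|/|\mathcal{D}|)^2 \cdot \sigma^2/|\mathcal{D}_k| = \sigma^2\sum_k |\mathcal{D}_k|/|\mathcal{D}|^2 = \sigma^2/|\mathcal{D}|$, which already matches the stated $\frac{2K}{9TL}\frac{\sigma^2}{|\mathcal{D}|}$ up to the factor $K$ (so I would keep the cruder $\sum_k(|\mathcal{D}_k|/|\mathcal{D}|)^2 \le K \max_k \dots$ or simply absorb constants to land exactly on the paper's form).

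Finally I would substitute $\eta = \frac{1}{3\sqrt{T}L}$: the cross term contributes $-\frac{1}{3\sqrt{T}L}\|\bm{g}(\bm{w}^{(t)})\|^2$, and $\frac{L\eta^2}{2} = \frac{1}{18TL}$ multiplies the quadratic-term bound; combining the $\|\bm{g}\|^2$ pieces gives the coefficient $\big(-\frac{1}{3\sqrt{T}L} + \frac{2}{9TL}\big)$ after accounting for the factor-of-2 from an $\|a+b\|^2 \le 2\|a\|^2 + 2\|b\|^2$ split (which is presumably where the $\frac{2}{9TL}$, $\frac{2K}{9TL}$, and $\frac{A^2G^2}{9TL}$ forms come from — i.e., bounding $\mathbb{E}\|\sum_k \tfrac{|\mathcal{D}_k|}{|\mathcal{D}|}\hat{\bm g}_k\|^2 \le 2\|\bm g(\bm w^{(t)})\|^2 + 2\cdot\text{Var}$), and taking total expectation over the history. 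The main obstacle I anticipate is the bookkeeping in the law-of-total-variance step: being careful that the two noise sources (dropout masks and minibatch draws) are conditionally independent given $\bm{w}^{(t)}$, that the dropout variance bound from Lemma~\ref{lem:expvar} is applied at the correct conditioning level, and that the device-weight squaring $(|\mathcal{D}_k|/|\mathcal{D}|)^2$ is handled consistently so the final constants come out exactly as $\frac{2}{9TL}$, $\frac{2K}{9TL}$, and $\frac{A^2G^2}{9TL}$ rather than off by a factor; reconciling the apparent mismatch between $\sum_k(|\mathcal{D}_k|/|\mathcal{D}|)^2\sigma^2/|\mathcal{D}_k| = \sigma^2/|\mathcal{D}|$ and the stated $K\sigma^2/|\mathcal{D}|$ will require either a looser intermediate bound or treating $K$ as an absorbed constant.
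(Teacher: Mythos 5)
Your proposal is correct in outline and would in fact establish a \emph{tighter} bound than the one stated, but it departs from the paper's argument at the second-order term, and the "apparent mismatch" you worry about at the end is resolved precisely by that difference. The paper does not perform an exact bias--variance decomposition of $\mathbb{E}\big\|\sum_k \tfrac{|\mathcal{D}_k|}{|\mathcal{D}|}\hat{\bm g}_k\big\|^2$ with independence and squared weights $(|\mathcal{D}_k|/|\mathcal{D}|)^2$. Instead it first applies Jensen's inequality with the \emph{linear} weights, $\big\|\sum_k \tfrac{|\mathcal{D}_k|}{|\mathcal{D}|}\hat{\bm g}_k\big\|^2 \le \sum_k \tfrac{|\mathcal{D}_k|}{|\mathcal{D}|}\|\hat{\bm g}_k\|^2$, and then uses two successive Peter--Paul splits $\|a+b\|^2 \le 2\|a\|^2+2\|b\|^2$ (first separating $\hat{\bm g}_k-\tilde{\bm g}_k$ from $\tilde{\bm g}_k$, then $\tilde{\bm g}_k-\bm g$ from $\bm g$). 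The linear weighting is exactly where the stated $K$ comes from: $\sum_k \tfrac{|\mathcal{D}_k|}{|\mathcal{D}|}\cdot\tfrac{\sigma^2}{|\mathcal{D}_k|} = \tfrac{K\sigma^2}{|\mathcal{D}|}$, and the two factors of $2$ from the splits produce the coefficients $\tfrac{2}{9TL}$ and $\tfrac{2K}{9TL}$ (the dropout term picks up only one factor of $2$, cancelling the $\tfrac{1}{2}$ in $\tfrac{L\eta^2}{2}$, hence $\tfrac{A^2G^2}{9TL}$). Your route, with squared weights and exact variance additivity across independent devices, yields $(-\tfrac{1}{3\sqrt{T}L}+\tfrac{1}{18TL})\|\bm g\|^2 + \tfrac{1}{18TL}\tfrac{\sigma^2}{|\mathcal{D}|} + \tfrac{A^2G^2}{18TL}\sum_k\tfrac{|\mathcal{D}_k|}{|\mathcal{D}|}\tfrac{\gamma_{k,t}}{1-\gamma_{k,t}}$, every term of which is dominated by the corresponding term of \eqref{equ:lemma_opt}; so no constant-absorption or deliberate loosening is needed --- your bound simply implies the stated one. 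The only bookkeeping you should make explicit is that the dropout masks are independent across devices and of the minibatch draws (the paper sidesteps this by never needing cross-device independence, since Jensen plus Peter--Paul works term by term), and that the conditional-variance bound of Lemma~\ref{lem:expvar} survives the outer expectation over the data; both are routine.
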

\begin{proof}
	Please refer to Appendix \ref{_proof_of_lemma_adaptive}.
\end{proof} 
It should be noticed that, to guarantee loss function reduction, namely, the loss function decreases in each round, the dropout rate should not be large; otherwise, the third term on the right side of \eqref{equ:lemma_opt} approaches infinity. This is also consistent with Assumption \ref{assumption2}. In this work, our objective is to minimize $\mathbb{E}\left[\mathcal{F}\left(\bm{w}^{(t+1)}\right)-\mathcal{F}\left(\bm{w}^{(t)}\right) \right]$ in each round $t$ via minimizing its upper bound in the right side of \eqref{equ:lemma_opt}.



\begin{theorem}\label{theorem:adaptive}
Based on Lemma \ref{lem:adaptive} and given $\eta= \frac{1}{3\sqrt{T}L}$, the ground-truth gradient vector via the training of FedDrop $\left\|\bm{g}\left(\bm{w}^{(t)}\right)\right\|^2 $ converges as follows:
	\begin{equation} \label{eqn:adaptive}
		\lim _{T \rightarrow+\infty} \frac{1}{T} \sum_{t=0}^{T-1}\left\|g\left(\bm{w}^{(t)}\right)\right\|^2
         \leq G_T,\\
	\end{equation}
	where  
\begin{equation}
    \begin{aligned}
        G_T 
        & = \lim _{T \rightarrow+\infty}\frac{1}{\sqrt{T}} \left\{ 9L\left[\mathcal{F}\left(\bm{w}^{(0)}\right) -\mathcal{F}^* \right] \right.\\
        & \left. \quad + 2K   \frac{\sigma^2}{|\mathcal{D}|} + \frac{A^2G^2}{T} \sum\limits_{t=0}^{T-1}\sum_{k=1}^K \frac{|\mathcal{D}_k|}{|\mathcal{D}|} \frac{\gamma_{k,t}}{1-\gamma_{k,t}}  \right\} = 0.\\
    \end{aligned}
\end{equation}
\end{theorem}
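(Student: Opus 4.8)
The plan is to derive Theorem~\ref{theorem:adaptive} directly by telescoping the per-round bound of Lemma~\ref{lem:adaptive} over $t=0,1,\ldots,T-1$. First I would sum both sides of \eqref{equ:lemma_opt} over $t$; the left side telescopes to $\mathbb{E}\left[\mathcal{F}\left(\bm{w}^{(T)}\right)-\mathcal{F}\left(\bm{w}^{(0)}\right)\right]$, and by Assumption~\ref{assumption4} this is bounded below by $\mathcal{F}^* - \mathcal{F}\left(\bm{w}^{(0)}\right)$. On the right side, the coefficient $-\frac{1}{3\sqrt{T}L}+\frac{2}{9TL}$ of $\|\bm{g}(\bm{w}^{(t)})\|^2$ is negative for all $T\geq 1$ (since $\frac{2}{9TL}<\frac{1}{3\sqrt{T}L}$ whenever $\sqrt{T}>\tfrac{2}{3}$, i.e. always in the relevant regime), so I would lower-bound its magnitude by $\frac{1}{9\sqrt{T}L}$ or simply keep it exact as $\frac{3\sqrt{T}-2}{9TL}$.

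Next I would rearrange the telescoped inequality to isolate $\sum_{t=0}^{T-1}\|\bm{g}(\bm{w}^{(t)})\|^2$ on one side, moving the $\sigma^2$ term and the dropout-variance term to the other side, and divide through by $T$ together with the (positive) coefficient multiplying the gradient-norm sum. This yields
\begin{equation}
\frac{1}{T}\sum_{t=0}^{T-1}\left\|\bm{g}\left(\bm{w}^{(t)}\right)\right\|^2 \leq \frac{1}{\sqrt{T}}\left\{9L\left[\mathcal{F}\left(\bm{w}^{(0)}\right)-\mathcal{F}^*\right]+2K\frac{\sigma^2}{|\mathcal{D}|}+\frac{A^2G^2}{T}\sum_{t=0}^{T-1}\sum_{k=1}^K\frac{|\mathcal{D}_k|}{|\mathcal{D}|}\frac{\gamma_{k,t}}{1-\gamma_{k,t}}\right\}
\end{equation}
after absorbing the factor $\frac{3\sqrt{T}-2}{9TL}$ and bounding it cleanly (e.g., using $\frac{1}{3\sqrt{T}L}-\frac{2}{9TL}\geq\frac{1}{9\sqrt{T}L}$ for $T\geq 1$, which gives the constant $9L$ after inversion). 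The right-hand side is exactly $G_T$ as defined in the statement.

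Finally, I would take $T\to+\infty$. Each term inside the braces is $O(1)$ in $T$ provided the dropout rates stay bounded away from $1$, namely $\sum_{k}\frac{|\mathcal{D}_k|}{|\mathcal{D}|}\frac{\gamma_{k,t}}{1-\gamma_{k,t}}$ is uniformly bounded over $t$ (this is where Assumption~\ref{assumption2} / the maximum-dropout-rate condition $\gamma_{k,t}<\theta$ enters, ensuring the averaged double sum has a finite limit), so the whole bracketed quantity is bounded and the prefactor $\frac{1}{\sqrt{T}}\to 0$, giving $G_T=0$ and hence \eqref{eqn:adaptive}.

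The main obstacle is bookkeeping rather than conceptual: one must be careful that the coefficient of the gradient-norm term is negative and correctly inverted (a sign slip here would flip the inequality), and one must justify that the time-averaged dropout-variance sum $\frac{1}{T}\sum_{t=0}^{T-1}\sum_k\frac{|\mathcal{D}_k|}{|\mathcal{D}|}\frac{\gamma_{k,t}}{1-\gamma_{k,t}}$ remains bounded so that multiplying by $\frac{1}{\sqrt{T}}$ really sends it to zero — if the dropout rates were allowed to drift toward $\theta$ (or $1$) as $t$ grows, this term could blow up and the limit would fail. Everything else is a routine telescoping-and-divide argument.
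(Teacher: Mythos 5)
Your proposal is correct and follows essentially the same route as the paper's proof in Appendix B: telescope the per-round bound of Lemma~\ref{lem:adaptive}, lower-bound the gradient coefficient by $\frac{1}{9\sqrt{T}L}$ via $\sqrt{T}\le T$, invoke Assumption~\ref{assumption4} to replace $\mathcal{F}(\bm{w}^{(T)})$ with $\mathcal{F}^*$, divide, and let $T\to\infty$. Your added remark that the time-averaged dropout-variance sum must stay bounded (via $\gamma_{k,t}<\theta$) for the limit to vanish is a point the paper leaves implicit, but it does not change the argument.
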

\begin{proof}
	Please refer to Appendix \ref{_proof_of_theorem:adaptive}.
\end{proof}
From Theorem \ref{theorem:adaptive}, it is observed that the convergence rate decreases with the increasing dropout rate of each device, since FedDrop reduces the communication and computational overhead in the per-round training, which inevitably incurs more training rounds for convergence. This thus provides the potential to reduce the overall convergence latency by making a trade-off between the per-round latency and the latency in the overall communication rounds by adjusting the dropout rates. Furthermore, the devices that cannot participate in FEEL due to limited computational or communication capabilities, can be involved in the training of FedDrop to contribute their data and computational resources.

\subsection{Problem Formulation}
In this work, we aim at minimizing the convergence latency, which is achieved by maximizing the learning loss reduction for each round under per-round latency constraint. For an arbitrary round $t$, the objective function for maximizing the learning loss reduction by joint design of dropout rate and bandwidth allocation is equivalent to 
\begin{equation}
    \min\limits_{\{ {\gamma}_{k,t},
				{\rho}_{k,t}\} }\;\;\mathbb{E} \left[\mathcal{F}\left(\bm{w}^{(t+1)}\right)-\mathcal{F}\left(\bm{w}^{(t)}\right) \right],
\end{equation}
where ${\gamma}_{k,t}$ and ${\rho}_{k,t}$ are the dropout rate and the bandwidth allocation ratio associated with device $k$. However, the exact value of the objective function above is intractable. Following a common approach in \cite{zeng2023federated,cao2021optimized,liu2023adaptive}, we minimize its upper bound approximation defined in the right side of \eqref{equ:lemma_opt} instead. Since the norm square of the ground-truth gradient vector $\|\bm{g} (\bm{w}^{(t)}) \|^2$, the gradient variance $\sigma^2$, and other parameters, i.e., $|\mathcal{D}|$, $|\mathcal{D}_k|$, $T$, and $L$ are constants, minimizing the right side of \eqref{equ:lemma_opt} is equivalent to 
\begin{equation}
     \min\limits_{\{{\gamma}_{k,t},
				{\rho}_{k,t}\}}\;\; \sum_{k=1}^K \frac{|\mathcal{D}_k|}{|\mathcal{D}|}\frac{\gamma_{k,t}}{1-\gamma_{k,t}}.
\end{equation}
As $\frac{\gamma_{k,t}}{1-\gamma_{k,t}}=-1+\frac{1}{1-\gamma_{k,t}}$, the minimization of the right side of \eqref{equ:lemma_opt} is further written as
\begin{equation}
     \min\limits_{\{{\gamma}_{k,t},
				{\rho}_{k,t}\}}\;\; \sum_{k=1}^K \frac{|\mathcal{D}_k|}{|\mathcal{D}|}\frac{1}{1-\gamma_{k,t}}.
\end{equation}
Due to the limited network resources, there are several constraints on the latency, energy consumption, bandwidth allocation, and dropout rate in each round $t$, as elaborated below.
\subsubsection{Per-Round Latency Constraint}
The latency of each device should not exceed the maximum permitted latency $T_0$ to complete this round. Based on the devices' latency $\{T_{k,t}\}$ derived in \eqref{Eq:DeviceLatency}, the latency constraint is given by 
\begin{equation} \label{Eq:con_lat}
 T_{k,t}^{\rm {com,dl}}+T_{k,t}^{\rm {cmp}}+T_{k,t}^{\rm {com,ul}} \leq T_0, \forall k.
\end{equation}
By substituting $T_{k,t}^{\rm {com,dl}}$ given in \eqref{eqn:communicationdl_time},
$T_{k,t}^{\rm {cmp}}$ given in \eqref{eqn:computation_time}, and $T_{k,t}^{\rm {com,ul}}$ given in \eqref{eqn:communicationul_time} into the latency constraint, we have
\begin{equation}
    \mathcal{C}_1: \; \frac{M_{k,t} Q}{\rho_{k,t}B}\left(\frac{1}{R_{k,t}^{\rm dl}}+\frac{1}{R_{k,t}^{\rm ul}}\right) + \frac{C_{k,t}|\mathcal{D}_k|}{f_{k,t}} \leq T_0, \forall k,
\end{equation}
\textcolor{blue}{where $M_{k,t} = (1-\gamma_{k,t})M_{\rm ori} $ is the number
of parameters of the subnet and $C_{k,t} = (1-\gamma_{k,t})C_{\rm {ori}} $ is the number of processor operations on one data sample}. 

\subsubsection{Energy Consumption Constraint}
The total energy consumption of each device
should be no larger than its energy budget $E_{k,0}$. Based on the energy consumption $\{E_{k,t}\}$ of devices given in \eqref{Eq:DeviceEnergy}, the energy consumption constraint is given by 
\begin{equation} \label{Eq:con_enr}
E_{k,t}^{\rm {com,ul}}+E_{k,t}^{\rm{cmp}}+\xi_k \leq E_{k,0}, \forall k.
\end{equation}
By substituting $E_{k,t}^{\rm{cmp}}$ in \eqref{eqn:computation_energy} and $E_{k,t}^{\rm{com,ul}}$ in  \eqref{eqn:communicationul_energy}, the energy consumption constraint is derived as
\begin{equation}
    \mathcal{C}_2:\;  \frac{M_{k,t} QP_{k,t}^{\rm{com,u l}}}{\rho_{k,t}BR_{k,t}^{\rm{ul}}} + C_{k,t}\Omega_k|\mathcal{D}_k|f_{k,t}^2 +\xi_k \leq E_{k,0}, \forall k.
\end{equation}

\subsubsection{Bandwidth Allocation Constraint}
The total allocated bandwidth to all devices should be no larger than the system bandwidth:
\begin{equation}
\mathcal{C}_3:\; \left\{\begin{array}{l}
\sum_{k=1}^K  \rho_{k, t}\leq 1, \\
\rho_{k, t} \geq 0, \; \forall k.
\end{array}\right.
\end{equation}

\subsubsection{Dropout Rate Constraint}
Based on the definition, the dropout rate of each device in each communication round should be limited between 0 and $\theta$, where $\theta$ is the maximum dropout rate ensuring the loss function reduction, namely,
\begin{equation}
\mathcal{C}_4:\; 0 \leq \gamma_{k,t} < \theta, \forall k.
\end{equation}

Under the constraints above, the optimization problem is formulated as follows:
\begin{equation}\label{equ:P_1}
\mathscr P_1 \quad
	\begin{array}{cl}
		\underset
		{
			\underset
			{
				k \in \mathcal{K} 
			}
			{
				\left \{
				{\gamma}_{k,t},
				{\rho}_{k,t}
				\right \}
			}
		}
		{
			\min
		}
		&	
		\sum_{k=1}^K \frac{|\mathcal{D}_k|}{|\mathcal{D}|}\frac{1}{1-\gamma_{k,t}},
		\\
		\text{subject to}
		& \mathcal{C}_1\sim \mathcal{C}_4.
	\end{array}
\end{equation}
\textcolor{blue}{We illustrate the trade-off between communication and computation through $\mathcal{C}_1$ and $\mathcal{C}_2$, where both time and energy constraints are considered. We also use dropout as a controller to collaborate with resource allocation strategies.} The convergence latency is reduced by solving $\mathscr P_1$  in each round. In the sequel, an arbitrary round is considered and the notation $t$ is omitted for simplicity. 

\section{Joint Optimization of Dropout Rate and Resource Allocation}
\label{sec:OptScheme}

\textcolor{blue}{$\mathscr P_1$ is not a convex problem due to the concaveness of $\mathcal{C}_1$ and $\mathcal{C}_2$. To tackle it, the following variable transformation is first made, 
$    x_k = \sqrt{1 - \gamma_k},$
where $x_k$ shows the probability of retaining the weights. It follows that $\mathcal{P}_1$ is equivalently derived as
\begin{equation}\label{equ:P_2}
\mathscr P_2 \quad
	\begin{array}{cl}
		\underset
		{
			\underset
			{
				k \in \mathcal{K} 
			}
			{
				\left \{
				{\gamma}_{k,t},
				{\rho}_{k,t}
				\right \}
			}
		}
		{
			\min
		}
		&	
		\sum_{k=1}^K \frac{|\mathcal{D}_k|}{|\mathcal{D}|}\frac{1}{x_k^2},
		\\
		\text{subject to}
		& \mathcal{C}_1\sim \mathcal{C}_4.
	\end{array}
\end{equation}
} Then, a low-complexity algorithm is proposed to obtain the optimal dropout rates and bandwidth allocation ratios in closed forms. 



\begin{lemma} \label{lem:cvx}
$\mathscr P_2$ is a convex optimization problem. 
\end{lemma}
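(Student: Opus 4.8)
The plan is to establish convexity by checking the objective and each constraint of $\mathscr P_2$ separately under the substitution $x_k=\sqrt{1-\gamma_k}$ (equivalently $1-\gamma_k=x_k^2$), the single nontrivial ingredient being that the quadratic-over-linear map $(x,\rho)\mapsto x^2/\rho$ is jointly convex on $\mathbb{R}\times\mathbb{R}_{++}$. I would begin by pinning down the feasible domain: $\mathcal{C}_4$, $0\le\gamma_k<\theta$, becomes $1-\theta< x_k^2\le 1$, and since $x_k=\sqrt{1-\gamma_k}\ge 0$ this is exactly the interval $\sqrt{1-\theta}< x_k\le 1$, i.e.\ two affine inequalities defining a convex set; in particular $x_k>0$ everywhere, so all reciprocals below are well defined. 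Constraint $\mathcal{C}_3$ is already a pair of affine inequalities in $\{\rho_k\}$ and needs no transformation.

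Next I would verify that the objective $\sum_k\frac{|\mathcal{D}_k|}{|\mathcal{D}|}x_k^{-2}$ is convex: each summand $x_k^{-2}$ has second derivative $6x_k^{-4}>0$ on $x_k>0$, hence is convex, and a nonnegative weighted sum of convex functions is convex.

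For $\mathcal{C}_1$ and $\mathcal{C}_2$ I would substitute $M_{k,t}=x_k^2 M_{\rm ori}$ and $C_{k,t}=x_k^2 C_{\rm ori}$ and note that in a given round the CPU frequency $f_{k,t}$, the spectral efficiencies $R_{k,t}^{\rm dl},R_{k,t}^{\rm ul}$, and the uplink power $P_{k,t}^{\rm com,ul}$ are all constants (the transmit powers and processor clocks are not optimization variables; what varies is the allocated bandwidth $\rho_{k,t}B$, while the per-Hz spectral efficiency is fixed by the channel and power). Then the communication term of each of $\mathcal{C}_1$ and $\mathcal{C}_2$ is a positive multiple of $x_k^2/\rho_k$, and the computation term is a positive multiple of $x_k^2$ (plus the constant $\xi_k$ in $\mathcal{C}_2$). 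Since $x_k^2/\rho_k$ is jointly convex in $(x_k,\rho_k)$ for $\rho_k>0$ --- e.g.\ because it is the perspective of the convex function $x\mapsto x^2$, or because $\{(x,\rho,\tau):x^2\le \rho\tau,\ \rho>0\}$ is a rotated second-order cone, or directly from the positive semidefiniteness of its Hessian --- and $x_k^2$ is convex, the left-hand sides of $\mathcal{C}_1$ and $\mathcal{C}_2$ are convex functions of $(\{x_k\},\{\rho_k\})$, so each constraint has the form (convex function)\,$\le$\,(constant) and carves out a convex set.

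Putting the pieces together, $\mathscr P_2$ minimizes a convex objective over the intersection of the convex sets defined by $\mathcal{C}_1$--$\mathcal{C}_4$, hence is a convex program. The step that needs the most care --- and the only one that is not bookkeeping --- is the joint convexity of $x_k^2/\rho_k$, precisely because that is the term coupling the dropout variables to the bandwidth variables; I expect it to be the main obstacle, and I would state the perspective/second-order-cone argument explicitly rather than treating it as folklore. A secondary point worth emphasizing in the write-up is the identification of $R_{k,t}^{\rm dl}$, $R_{k,t}^{\rm ul}$, $P_{k,t}^{\rm com,ul}$ and $f_{k,t}$ as round-constants: if any of these were a free variable, the constraints would not reduce to this clean convex form.
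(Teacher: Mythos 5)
Your proposal is correct and follows essentially the same route as the paper: reduce everything to the joint convexity of the quadratic-over-linear term $x_k^2/\rho_k$ coupling the two variable blocks, which the paper establishes by computing the $2\times 2$ Hessian of $x_k^2 M_{\rm ori}/\rho_k$ and noting it is positive semidefinite (your perspective-function and second-order-cone arguments are equivalent alternatives). Your additional bookkeeping --- the explicit interval $\sqrt{1-\theta}<x_k\le 1$ for $\mathcal{C}_4$ and the observation that $R_{k}^{\rm dl},R_{k}^{\rm ul},P_{k}^{\rm com,ul},f_{k}$ are per-round constants --- is consistent with, and slightly more careful than, what the paper writes.
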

\begin{proof}
Obviously, the objective function is convex. $\mathcal{C}_3$ and $\mathcal{C}_4$ form convex sets.  $\mathcal{C}_1$ and $\mathcal{C}_2$ form convex sets if the following function in terms of $\gamma_k$ and $\rho_k$ is convex:
\begin{equation}\label{Eq:ConvexProof}
 \dfrac{M_k}{\rho_k} = \dfrac{x_k^2M_{\rm ori}  }{ \rho_k }, 
\end{equation}
where $x_k \in (0,1]$ and  $\rho_k \in (0,1]$. Its Hessian matrix is 
$$
    \left[
        \begin{array}{clr}
        \dfrac{2 M_{\rm ori} }{\rho_k} & -\dfrac{2 M_{\rm ori} x_k }{\rho_k^2}\\
        -\dfrac{2 M_{\rm ori} x_k }{\rho_k^2}  & \dfrac{2 M_{\rm ori} x_k^2}{\rho_k^3}
        \end{array}
    \right]
$$
which is non-negative. Hence, $\mathscr P_1$ is a convex optimization problem, thus completing the proof.
\end{proof}
Then, the primal-dual method is used to obtain its optimal solution. Specifically, the Larange function for $\mathscr P_1$ is given as
\begin{equation} \label{eqn:KKT}
    \begin{aligned}
        &\mathcal{L}
         =\sum_{k=1}^K \frac{|\mathcal{D}_k|}{|\mathcal{D}| } \frac{1}{x_k^2} + \mu\left(\sum_{k=1}^K \rho_{k}-1\right)\\
        & +\sum_{k=1}^K \lambda_{k}\left(\frac{M_{k} Q}{r_{k}^{\rm{dl}}}+\frac{M_{k} Q}{r_{k}^{\rm{ul}}} + \frac{C_{k}|\mathcal{D}_k|}{f_{k}}-T_0\right)\\
        & +\sum_{k=1}^K \nu_{k}\left(\frac{P_{k}^{\rm{com,ul}}M_{k} Q}{r_{k}^{\rm{ul}}} +\Omega_kC_{k}|\mathcal{D}_k|f_k^2 +\xi_k-E_{k,0} \right),\\
    \end{aligned}
\end{equation}
where $\mu$, $\{\lambda_{k}\}$ and $\{\nu_{k}\}$ are the Lagrangian multipliers. The Karush-Kuhn-Tucker (KKT) conditions conclude $\frac{\partial \mathcal{L}}{\partial x_{k}}=0$ and $\frac{\partial \mathcal{L}}{\partial \rho_{k}}=0$.


\begin{figure*}[ht] 


\centering
\begin{equation} 
\label{equ:gamma1}
    \begin{aligned}
        x_k=\sqrt[4]{ \frac{2|\mathcal{D}_k|}{ |\mathcal{D}| \left(\lambda_k\left(\frac{2\textcolor{blue}{M_{\text{ori}}}Q}{\rho_{k}BR_{k}^{\rm{dl}}}+\frac{2\textcolor{blue}{M_{\text{ori}}}Q}{\rho_{k}BR_{k}^{\rm{ul}}}+\frac{2\textcolor{blue}{C_{\text{ori}}}|\mathcal{D}_k|}{f_{k}}\right) +\nu_k \left(\frac{2\textcolor{blue}{M_{\text{ori}}}QP_{k}^{\rm{com,ul}}}{\rho_{k}BR_{k}^{\rm{ul}}}+2\Omega_k\textcolor{blue}{C_{\text{ori}}}|\mathcal{D}_k|f_{k}^2\right)\right) } }.
    \end{aligned}
\end{equation}

\centering
\begin{equation}
\label{equ:rho1}
    \begin{aligned}
        \rho_k=\sqrt{\frac{\left(\lambda_k \left( \frac{M_kQ}{BR_k^{ul}} + \frac{M_kQ}{BR_k^{dl}} \right) + \nu_k \frac{P_{k}^{\rm{com,ul}}M_{k}Q}{BR_{k}^{\rm{ul}}} \right) }{\mu} }.
    \end{aligned}
\end{equation}
\vspace*{8pt}
\hrulefill

\end{figure*}
\textcolor{blue}{We obtain the closed-form solution for the optimal $\gamma_k$ and $\rho_k$ as given in \eqref{equ:gamma1} and \eqref{equ:rho1}.} It is observed that when the device has better uplink and downlink channel conditions, it should be allocated with a smaller dropout rate and bandwidth. Besides, the relationship between the optimal dropout rate for each device and its computation speed is non-monotonic as a higher computation speed leads to lower latency but higher energy consumption. Moreover, the optimal bandwidth allocation ratio for each device \textcolor{blue}{shows the opposite trend} to its dropout rate. A lower dropout rate leads to higher loads for communication and computation. 
Based on the closed-form solution in \eqref{equ:gamma1} and \eqref{equ:rho1}, a primal-dual method based low-complexity algorithm is proposed, as detailed in Algorithm \ref{Alg:Solution}, where  $\eta_{\mu}$, $\eta_{\lambda k}$, and $\eta_{\nu k}$ are the step sizes for updating $\mu$, $\lambda_k$, and $\nu_k$,  and 
\begin{equation}
\left\{
\begin{aligned}
&  \delta_{\lambda,k,i} =  \frac{M_{k} Q}{r_{k}^{\rm{dl}}}+\frac{M_{k} Q}{r_{k}^{\rm{ul}}} + \frac{C_{k}|\mathcal{D}_k|}{f_{k}}-T_0,\\
& \delta_{\nu,k,i} = \frac{P_{k}^{\rm{com,ul}}M_{k} Q}{r_{k}^{\rm{ul}}} +\Omega_kC_{k}|\mathcal{D}_k|f_k^2 +\xi_k-E_{k,0}
\end{aligned}
\right.
\end{equation}
are the partial derivatives of $\mathcal{L}$ in terms of $\lambda_k^{(i)}$ and $\nu_k^{(i)}$, respectively. Based on the closed-form expressions for $\gamma_{k}$ and $\rho_k$, the complexity of Algorithm \ref{Alg:Solution} is only $ \mathcal{O}(K^2)$, where $K$ is the number of devices.
\begin{algorithm}[]
	\caption{Joint Design of Dropout Rate and Bandwidth Allocation}\label{Alg:Solution}
	\LinesNumbered
	\KwIn{ \{$R_k^{\rm{dl}}$\},  \{$R_k^{\rm{ul}}$\}  \{$P_k^{\rm{com,ul}}$\}, \{$f_k$\}, $T_0$, \{$E_{k,0}$\}.} 

     \textbf{Initialize} \{$\mu^{(0)}$\},  \{$\lambda_k^{(0)}$\}, \{$\nu_k^{(0)}$\}, the step sizes \{$\eta_{\mu}$\}, \{$\eta_{\lambda k}$\}, \{$\eta_{\nu k}$\}, and $i=0$. \\
    \textbf{Loop}.\\  
    Calculate \{$\gamma_{k}$\} and \{$ \rho_{k}$\} using \eqref{equ:gamma1} and \eqref{equ:rho1}.\\  
    Update the multipliers as
    $$
    \mu^{(i+1)}= \max \left\{ \mu^{(i)} + \eta_{\mu} (\sum_{k=1}^K \rho_{k}-1),0 \right\}, 
    $$
    $$
    \lambda_k^{(i+1)}= \max \left\{ \lambda_k^{(i)} +  \eta_{\lambda k} \delta_{\lambda,k,i},\; 0 \right\}, \forall k,
    $$    
    \[
    	\begin{split}
	\nu_k^{(i+1)} 
       = ~& \max \left\{ \nu_k^{(i)} + \eta_{\nu k}\delta_{\nu,k,i}, \; 0 \right\}, \forall k,
	    \end{split}
    \]   
    \\
    $i = i + 1$.\\
    \textbf{Until Convergence}.\\
    \KwOut{\{$\gamma_{k}$\} and \{$ \rho_{k}$\}.}
\end{algorithm}


\section{Simulation Results}

\subsection{Simulation Settings}
Consider a single-cell FedDrop system with one server located at the center and $K = 10$ devices. 
\textcolor{blue}{The uplink and downlink channel gains of devices are assumed to follow Rician fading with a $\kappa$-factor of 10 and an average path loss of $10^{-3}$. Rician fading channel model is considered which includes a dominant line-of-sight (LoS) component and scattered multipath components.} Other related parameters are listed in Table \ref{table:para} by default. The simulation is conducted in two scenarios. \textcolor{blue}{One is the underfitting scenario with a simple model, LeNet, while the other is the overfitting scenario with a complex model, AlexNet.
Both of these two models are trained on CIFAR100. The details of two models are elaborated below.}


\begin{itemize}
    \item \textcolor{blue}{LeNet: A convolutional neural network with two convolutional layers and two FC layers is adopted. The kernel sizes are both $5 \times 5$. Max pooling operations are conducted following each convolutional layer. The activation function is Tanh. The sizes of these two FC layers are $120$ and $84$, respectively. }

    \item \textcolor{blue}{AlexNet: A convolutional neural network with five convolutional layers and two FC layers is adopted. The kernel sizes are both $3 \times 3$. Max pooling operations are conducted following the first, second, and fifth convolutional layers. The activation function is ReLU. The sizes of these two FC layers are both $4096$.}
\end{itemize}


\begin{table}[htbp]
	\centering
	\caption{Simulation parameter settings}
	\label{table:para}
	\begin{tabular}{|c|c|c|}
		\hline
		Parameter          & Description                                                              & Value                                                                            \\ \hline
		$f_k$          & \begin{tabular}[c]{@{}c@{}}  CPU frequency \\of device $k$ ($\mathrm{Hz}$)       \end{tabular}                               & $\operatorname{Unif}(0.7,1) \times 7 \times 10^9 $                                                            \\ \hline
		$p_k$          & \begin{tabular}[c]{@{}c@{}} transmission \\power of device $ k $ ($\mathrm{W}$)    \end{tabular}                                & $\operatorname{Unif}(0.3,1) \times 10^{-2}$                                                            \\ \hline
		$\Omega_k$            & \begin{tabular}[c]{@{}c@{}}CPU constant\\ of device $ k $   \end{tabular}                                & $\operatorname{Unif}(0.3,1) \times 10^{-26}  $                                                                                      \\ \hline
		$\sigma^2$            & \begin{tabular}[c]{@{}c@{}} Noise spectral \\density  ($\mathrm{W/Hz}$)  \end{tabular}                                                & $10^{-13}  $                                                                   \\ \hline
		$Q$              & \begin{tabular}[c]{@{}c@{}} Transmission \\bits per parameter \end{tabular}                              & $256   $                                                                           \\ \hline
	\end{tabular}
	
\end{table}
\subsection{Effects of Dropout Rate}

\begin{table*}[htbp]
\centering
\caption{\textcolor{blue}{Accuracy of FedDrop and other methods given the same sparsity}}
\label{table:baseline}
\setlength{\tabcolsep}{4mm}
\begin{tabular}{|c|c|c|c|c|c|c|c|c|c|c|c|}
\hline
Model                    & Data                     & Method  & 0     & 0.05           & 0.1            & 0.15           & 0.2            & 0.25           & 0.3            & 0.35           & 0.4            \\ \hline
\multirow{3}{*}{LeNet}   & IID                      & FedDrop & 26.17 & 25.70          & 24.83          & 24.69          & 24.33          & 22.32          & 20.94          & 20.52          & 19.80          \\ \cline{2-12} 
                         & \multirow{2}{*}{Non-IID} & FedDST  & /     & 10.70          & 14.32          & 13.09          & 11.66          & 13.85          & 13.39          & 11.60          & 14.20          \\ \cline{3-12} 
                         &                          & FedDrop & 25.19 & \textbf{25.10} & \textbf{24.03} & \textbf{23.40} & \textbf{22.57} & \textbf{21.76} & \textbf{20.62} & \textbf{19.29} & \textbf{19.09} \\ \hline
\multirow{3}{*}{AlexNet} & IID                      & FedDrop & 26.03 & 30.87          & 32.59          & 33.42          & 32.88          & 32.86          & 31.62          & 30.97          & 30.41          \\ \cline{2-12} 
                         & \multirow{2}{*}{Non-IID} & FedDST  & /     & \textbf{32.60} & 29.74          & 32.68          & 31.69          & 30.72          & \textbf{32.24} & \textbf{31.10} & \textbf{31.72} \\ \cline{3-12} 
                         &                          & FedDrop & 25.91 & 29.11          & \textbf{32.16} & \textbf{32.77} & \textbf{32.15} & \textbf{31.88} & 31.50          & 30.66          & 30.20          \\ \hline
\end{tabular}
\end{table*}

In this part, all devices perform a uniform dropout rate to show the effects of dropout rate on the convergence rate. Specifically, different subnets are generated using the same dropout rate and trained in each round. \textcolor{blue}{We use a $Dirichlet (0.1)$ distribution for each class to distribute a non-IID fashion, as in \cite{bibikar2022federated}. The data distribution is shown in Fig. \ref{dirichlet distribution}, where a larger circle means more data samples in the class.} 

\begin{figure*}[htbp]
\centering
\includegraphics[scale=0.4]{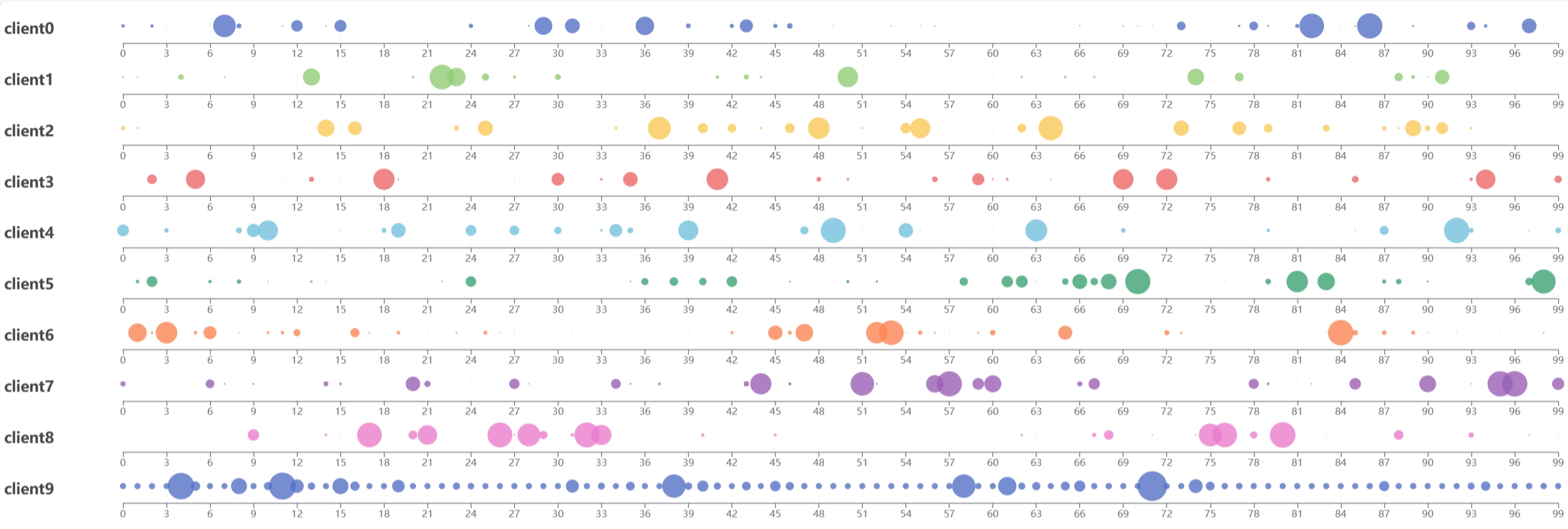}
\caption{\textcolor{blue}{The Dirichlet distribution of data on all the clients.}}
\label{dirichlet distribution}
\end{figure*}

\textcolor{blue}
{
\subsubsection{Comparison of Dropout Rate with other baselines} We compare FedDrop to competitive state-of-the-art baselines. FedDST \cite{bibikar2022federated} is included, while algorithms producing separate models for each client are not considered. As shown in Table \ref{table:baseline}, FedDrop consistently outperforms LeNet, and provides better performance with lower sparsity in AlexNet, under non-IID data distribution. This is because, with a rather complex network and a higher sparsity, knowing more knowledge in advance can help improve the regularization effect. While this procedure will incur more overheads and complexity.
}

\subsubsection{Effects of Dropout Rate on Performance}
\begin{itemize}
    \item \textbf{Underfitting Scenario:} The effects of different dropout rates on testing accuracy value in underfitting scenarios are shown in the first four lines. It is observed that the testing accuracy decreases with an increasing dropout rate under both data distributions. \textcolor{blue}{This is because a simple model is trained with sufficient data samples, resulting in underfitting.} Dropout in the underfitting scenario degrades the presentation ability of the AI model. 
    
    \item \textbf{Overfitting Scenario:} The effects of different dropout rates on testing accuracy value in overfitting scenarios are shown in the last three lines. \textcolor{blue}{In this scenario, the model without dropout is overfitted, i.e., a complex model is trained on relatively insufficient data samples. As a result, the testing accuracy of the training with dropout rates of 0.15 is better, since the overfitting is alleviated in the former case \cite{srivastava2014dropout}. However, as the dropout rate gets larger (i.e. $0.4$), the testing accuracy decreases since the presentation ability of the network decreases. } 
\end{itemize}

\subsection{Effects of Network Resources on FedDrop Learning Performance}
In this part, the effects of per-round latency and system bandwidth on the learning performance are investigated. Two baseline algorithms are adopted for comparison with our proposed scheme, which are described below.
\begin{itemize}
    \item \textbf{Proposed Scheme}: The optimal scheme proposed in Algorithm \ref{Alg:Solution}. 
    \item \textbf{Bandwidth-aware Scheme}: The bandwidth is randomly allocated and the dropout rates of all devices are optimized.
    \item \textbf{Scheme without Dropout:} Sufficient bandwidth is allocated and dropout is not considered, which is an ideal benchmark used to compare with the proposed scheme. 
\end{itemize}

\subsubsection{Effects of Per-round Latency}
Fig. \ref{Fig.cifart_len} and Fig. \ref{Fig.cifart_alex} present the effects of per-round latency in underfitting and overfitting scenarios, respectively. \textcolor{blue}{The proposed scheme consistently outperforms the bandwidth-aware scheme, achieving convergence in fewer rounds. Notably, as per-round latency increases, the performance gap narrows between the proposed scheme and the scheme without dropout. In both underfitting and overfitting scenarios, the convergence round decreases with an increasing per-round latency. This is because, with a larger per-round latency, the dropout rate decreases, effectively reducing the amount of noise added to the training process and consequently accelerating the convergence rate. In the overfitting scenario, the convergence slope exhibits two distinct phases: the transition from initial underfitting to optimal performance, followed by a shift from optimal to overfitting. }



\subsubsection{Effects of System Bandwidth}
Fig. \ref{Fig.cifarb_len} and Fig. \ref{Fig.cifarb_alex} present the effects of system bandwidth in underfitting and overfitting scenarios, respectively. \textcolor{blue}{It is observed that the proposed scheme always outperforms the bandwidth-aware scheme, with fewer rounds to converge. Furthermore, with the increase of system bandwidth, the gap decreases between the proposed scheme and the scheme without dropout. In both underfitting and overfitting scenarios, the convergence round decreases with a system bandwidth. The reasons for the above phenomenon are the same as that of the effects of per-round latency. } 

\begin{figure*}[htbp]
\begin{center}
\subfigure[\textcolor{blue}{Convergence round v.s. per-round latency in underfitting scenarios}]
{
\label{Fig.cifart_len}
\includegraphics[width=0.41\textwidth]{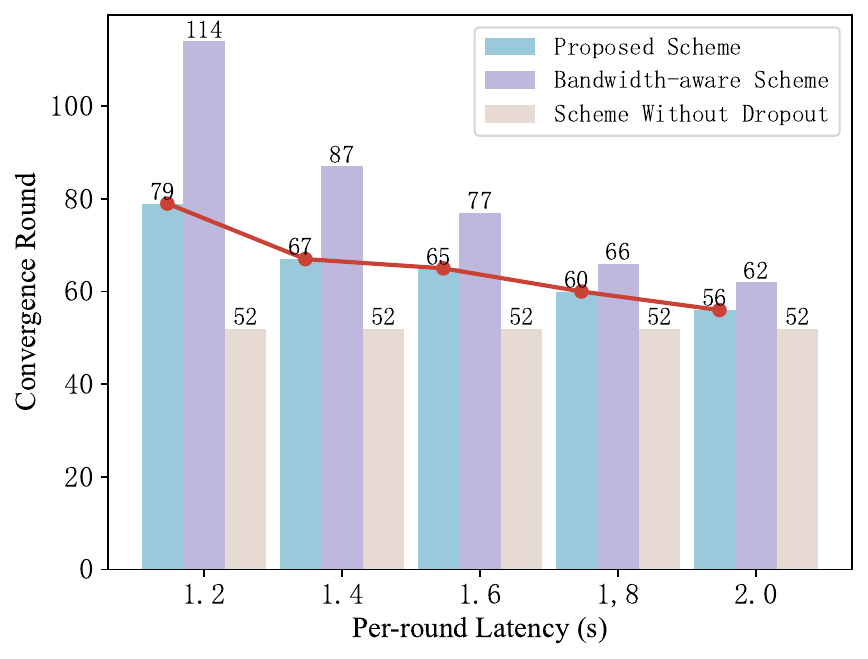}
}
\subfigure[\textcolor{blue}{Convergence round v.s. per-round latency in overfitting scenarios}]
{
\label{Fig.cifart_alex}
\includegraphics[width=0.41\textwidth]{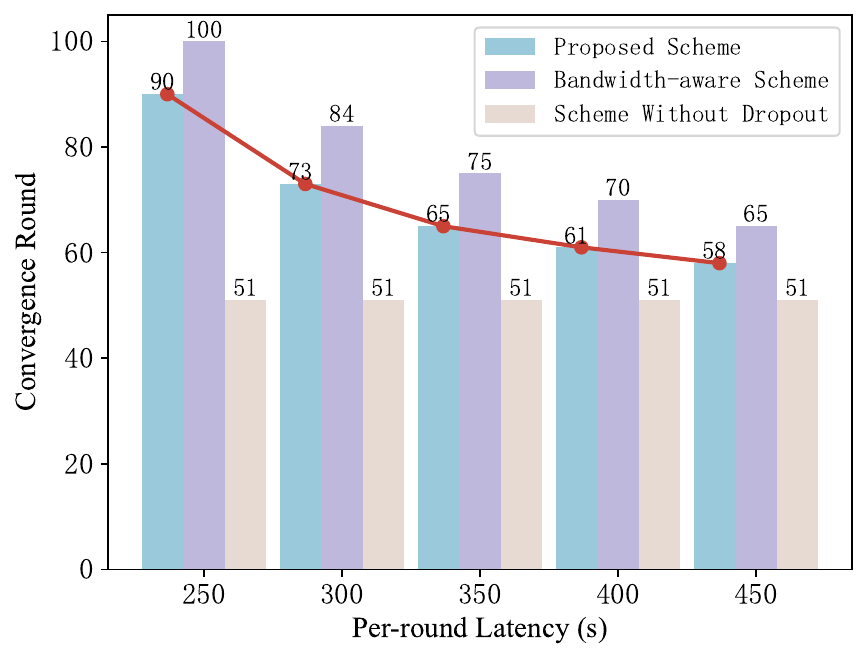}
}
\caption{\textcolor{blue}{Effects of per-round latency on convergence round in underfitting and overfitting scenarios, respectively.}}
\label{Fig.cifart}
\end{center}
\end{figure*}

\begin{figure*}[htbp]
\begin{center}
\subfigure[\textcolor{blue}{Convergence round v.s. system bandwidth in underfitting scenarios}]
{
\label{Fig.cifarb_len}
\includegraphics[width=0.41\textwidth]{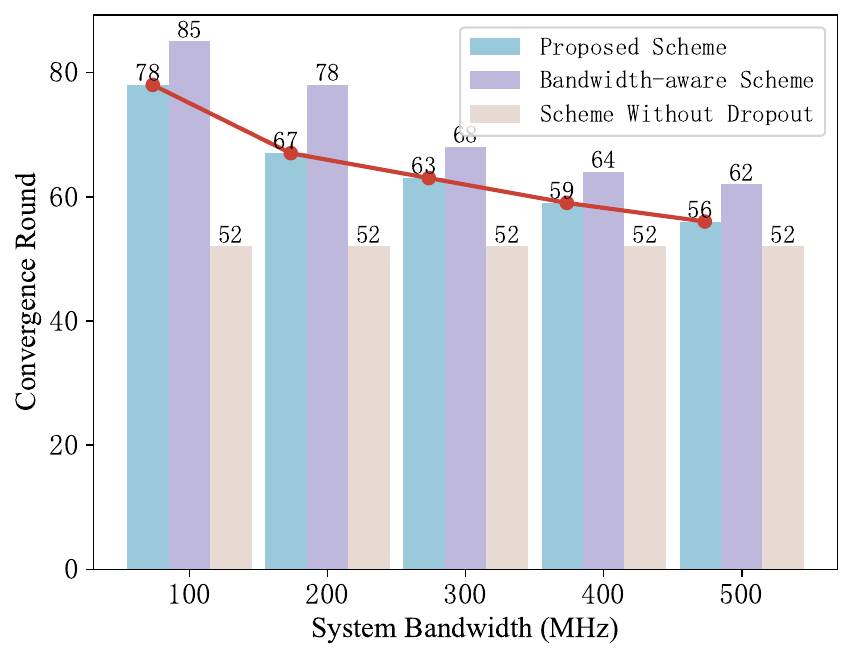}
}
\subfigure[ \textcolor{blue}{Convergence round v.s. system bandwidth in overfitting scenarios}]
{
\label{Fig.cifarb_alex}
\includegraphics[width=0.41\textwidth]{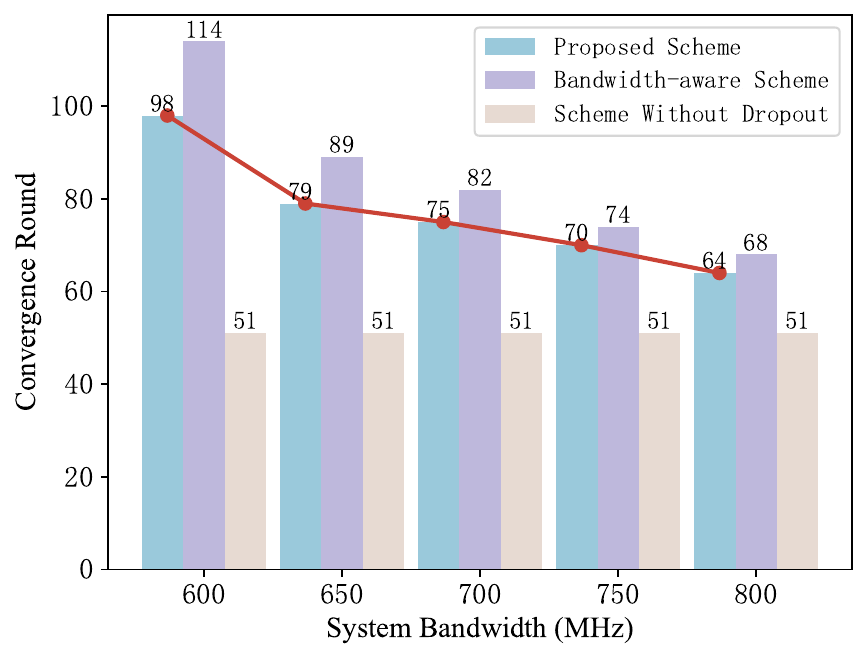}
}
\caption{ \textcolor{blue}{Effects of system bandwidth on convergence round in underfitting and overfitting scenarios, respectively.}}
\label{Fig.cifarb}
\end{center}
\end{figure*}

\subsubsection{Testing Accuracy}
\textcolor{blue}{The effects of per-round latency on testing accuracy are shown in Fig. \ref{Fig.cifar_acc}.} It is obvious that the proposed scheme always outperforms the bandwidth-aware scheme. The performance gain remains increasing when the latency continuously increases. This is because, with a looser constraint, a smaller dropout rate can meet the transmission requirements, thus leaving more parameters in the network. As shown in Fig. \ref{Fig.cifarb_acc}, in the overfitting scenario, the proposed scheme outperforms the scheme without dropout, as overfitting is effectively mitigated. \textcolor{blue}{Specifically, when the constraint is large enough, the performance gain of the proposed scheme and bandwidth-aware scheme is almost equal, because the dropout rates in both schemes are almost $0$, regardless of the bandwidth optimization.}

\begin{figure*}[htbp]
\begin{center}
\subfigure[\textcolor{blue}{Testing accuracy v.s. per-round latency in underfitting scenarios}]
{
\label{Fig.cifart_acc}
\includegraphics[width=0.41\textwidth]{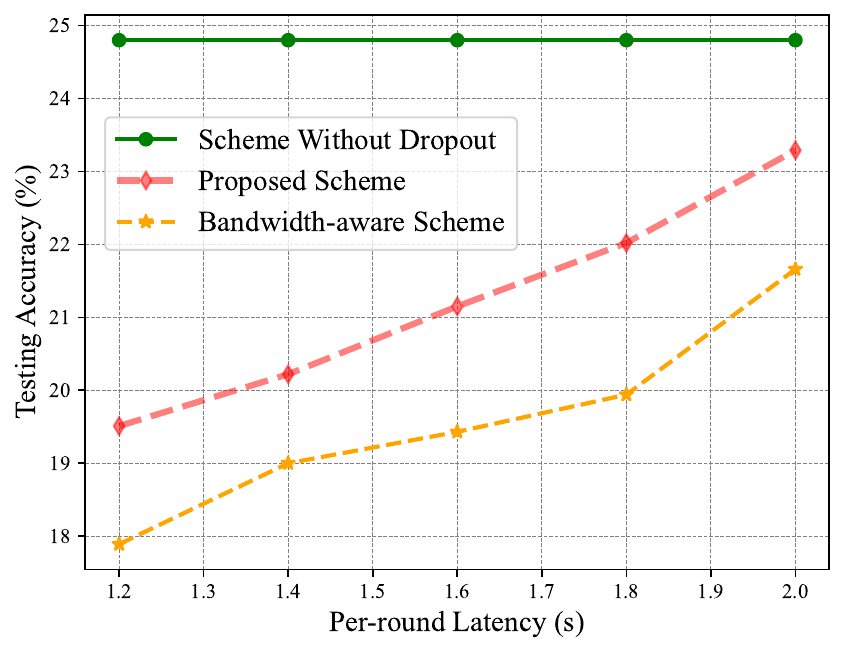}
}
\subfigure[\textcolor{blue}{Testing accuracy v.s. per-round latency in overfitting scenarios}]
{
\label{Fig.cifarb_acc}
\includegraphics[width=0.41\textwidth]{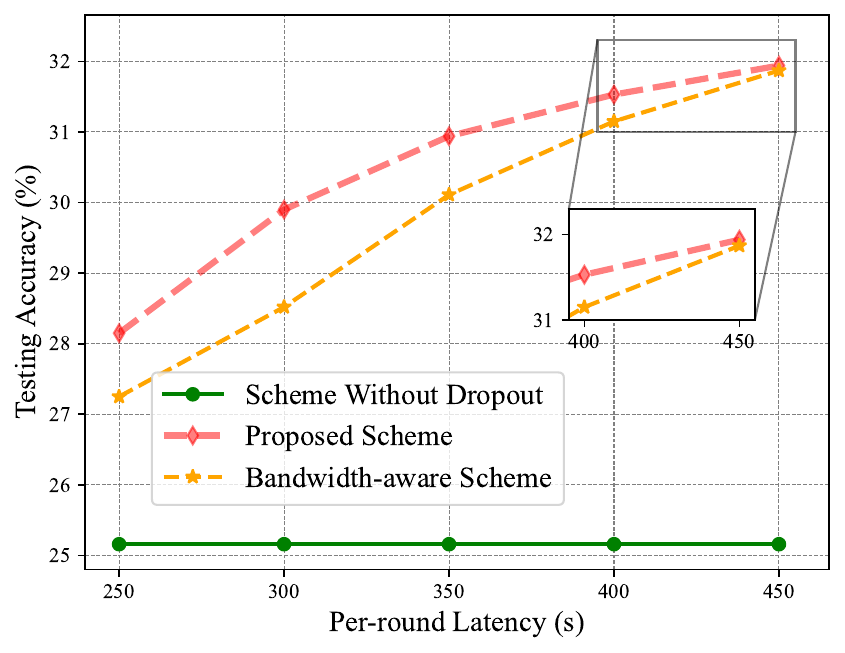}
}
\caption{\textcolor{blue}{Effects of per-round latency on testing accuracy in underfitting and overfitting scenarios, respectively.}}
\label{Fig.cifar_acc}
\end{center}
\end{figure*}

\subsection{Effects of Network Resources on FedDrop Performance}
\textcolor{blue}{We also change the edge learning environments, with different device numbers, computing frequency, and transmission power across different rounds to better reflect the applicability of the algorithm in realistic complex environments. The device number varies between 5 and 10, with the ranges of computing frequency and transmission power for each device detailed in Table \ref{table:para}. As shown in Fig. \ref{Fig.iid}, the testing accuracy of IID data is better than that of the non-IID data. This superiority is fundamentally rooted in the skewed data distribution characteristic of non-IID scenarios, where certain classes are underrepresented or completely absent at various data points. Moreover, algorithms exhibit slower convergence rates when processing non-IID data, necessitating additional training rounds to achieve comparable performance levels. Furthermore, the proliferation of network resources exacerbates the performance disparity between IID and non-IID data, as the intrinsic skewness of non-IID data makes it increasingly challenging for models to attain global optimality in resource-rich network environments.}

\begin{figure*}[htbp]
\begin{center}
\subfigure[\textcolor{blue}{Testing accuracy v.s. per-round latency in underfitting scenarios}]
{
\label{Fig.lenetiid}
\includegraphics[width=0.41\textwidth]{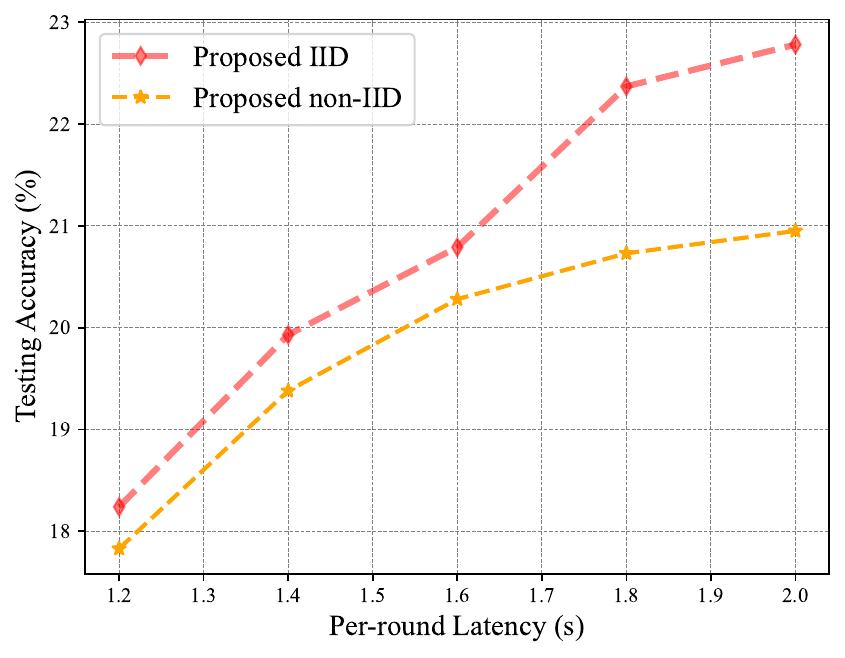}
}
\subfigure[\textcolor{blue}{Testing accuracy v.s. per-round latency in overfitting scenarios}]
{
\label{Fig.alexiid}
\includegraphics[width=0.41\textwidth]{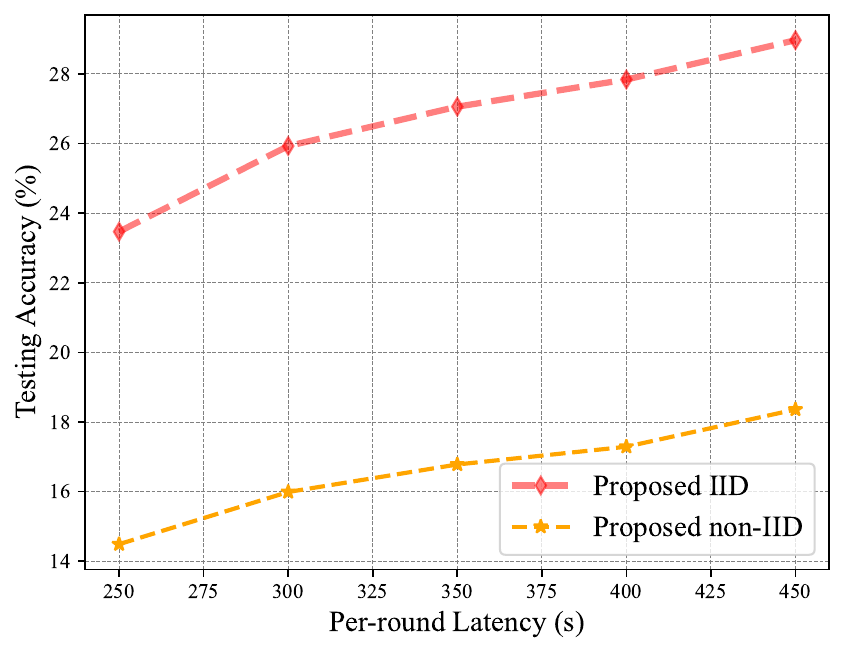}
}
\caption{\textcolor{blue}{Effects of dynamic wireless environments in underfitting and overfitting scenarios, respectively.}}
\label{Fig.iid}
\end{center}
\end{figure*}

\section{Conclusion}
In this paper, we propose a detailed theoretical analysis for the convergence performance of the FedDrop. The convergence upper bound is further utilized to formulate an optimization problem. Under the per-round latency and overall power constraints, we obtain the optimal dropout rate and bandwidth resource allocation in closed forms. Extensive simulation results show that the proposed scheme achieves a higher accuracy and lower computational and communication overhead than the bandwidth-aware scheme and the scheme without dropout. In this paper, we only consider the DNNs, while the case of the LLMs is left for future research. Moreover, other promising techniques such as compression and Air-comp can also be added to the FedDrop scheme to further improve the learning performance.

\appendices

\section{PROOF OF LEMMA \ref{lem:adaptive}}\label{_proof_of_lemma_adaptive}

We now analyze the convergence of FedDrop with respect to the dropout rate $\gamma$. Throughout the proof, we use the following inequalities frequently.

From Jensen's inequality, for any $\bm{g}_k \in \mathbb{R}^d$, $k \in \{1,2,\ldots,K\}$, we have
$    \left\|\frac{1}{K} \sum_{k=1}^K \bm{g}_k \right\|^2 \leq \frac{1}{K} \sum_{k=1}^K \| \bm{g}_k \|^2,$
and directly gives
$    \left\| \sum_{k=1}^K \frac{|\mathcal{D}_k|}{|\mathcal{D}|} \bm{g}_k \right\|^2 \leq \sum_{k=1}^K  \frac{|\mathcal{D}_k|}{|\mathcal{D}|} \| \bm{g}_k \|^2.$
Peter-Paul inequality (also known as Young's inequality) is shown as
$    <\bm{g}_1,\bm{g}_2> \leq \frac{1}{2}\| \bm{g}_1 \|^2 + \frac{1}{2}\| \bm{g}_2 \|^2,$
and for any constant $s>0$ and $\bm{g}_1, \bm{g}_2 \in \mathbb{R}^d$, we have
$    \| \bm{g}_1 + \bm{g}_2 \|^2 \leq (1+s) \| \bm{g}_1 \|^2 + (1+\frac{1}{s}) \| \bm{g}_2 \|^2.$

Consider an arbitrary round, say the $t$-th. The gradient descent for the FedDrop scheme is
\begin{equation} \label{equ:basic1}
    \bm{w}^{(t+1)}=\bm{w}^{(t)}-\eta \sum_{k=1}^K \frac{|\mathcal{D}_k|}{|\mathcal{D}|} \hat{\bm{g}}_k\left(\bm{\hat{w}}_{k}^{(t)}\right),
\end{equation}
where $\hat{\bm{g}}_k\left(\bm{\hat{w}}_{k}^{(t)}\right)$ is the gradient vector of the subnet in device $k$, and $\eta$ is the learning rate. Then, according to Assumption \ref{assumption5}, it is derived that
\begin{equation} \label{equ:Exp}
    \begin{aligned}
        & \mathbb{E} \left[\mathcal{F}\left(\bm{w}^{(t+1)}\right)-\mathcal{F}\left(\bm{w}^{(t)}\right) \right] \\
          \leq  &  -\eta\bm{g}^{\top}(\bm{w}^{(t)})\sum_{k=1}^K\frac{|\mathcal{D}_k|}{|\mathcal{D}|}\mathbb{E}\left[\hat{\bm{g}}_k (\bm{\hat{w}}_{k}^{(t)}) \right]   \\
          & + \frac{L\eta^2}{2}\mathbb{E}\left[ \left\| \sum_{k=1}^K\frac{|\mathcal{D}_k|}{|\mathcal{D}| }\hat{\bm{g}}_k (\bm{\hat{w}}_{k}^{(t)}) \right\|^2\right].\\
    \end{aligned}
\end{equation}
For the first term on the right side of \eqref{equ:Exp}, substitute \eqref{equ:expect} into it, we obtain 
\begin{equation} \label{equ:expect_1}
    \begin{aligned}
        & -\eta \bm{g}^T(\bm{w}^{(t)})\sum_{k=1}^K\frac{|\mathcal{D}_k|}{|\mathcal{D}|}\mathbb{E}\left[\hat{\bm{g}}_k (\bm{\hat{w}}_{k}^{(t)}) \right] \\ 
        & = -\eta \bm{g}^T(\bm{w}^{(t)})\sum_{k=1}^K\frac{|\mathcal{D}_k|}{|\mathcal{D}|}\Tilde{\bm{g}}_k (\bm{w}^{(t)}) = -\eta \|\bm{g} (\bm{w}^{(t)}) \|^2.\\
      \end{aligned}
\end{equation}
For the second term on the right side of \eqref{equ:Exp}, based on Jensen's inequality, we obtain 
$\frac{L\eta^2}{2}\mathbb{E}\left[ \left\| \sum_{k=1}^K\frac{|\mathcal{D}_k|}{|\mathcal{D}| }\hat{\bm{g}}_k (\bm{\hat{w}}_{k}^{(t)}) \right\|^2\right]  \leq \frac{L\eta^2}{2}\mathbb{E}\left[ \sum_{k=1}^K\frac{|\mathcal{D}_k|}{|\mathcal{D}| } \left\| \hat{\bm{g}}_k (\bm{\hat{w}}_{k}^{(t)}) \right\|^2\right].$
By utilizing Peter-Paul inequality, it is further calculated as
\begin{equation} \label{2}
    \begin{aligned}
        & \frac{L\eta^2}{2}\mathbb{E}\left[ \left\| \sum_{k=1}^K\frac{|\mathcal{D}_k|}{|\mathcal{D}| }\hat{\bm{g}}_k (\bm{\hat{w}}_{k}^{(t)}) \right\|^2\right] \\
        & \leq \frac{L\eta^2}{2}\mathbb{E}\left[ \sum_{k=1}^K\frac{|\mathcal{D}_k|}{|\mathcal{D}| } \left\| \hat{\bm{g}}_k (\bm{\hat{w}}_{k}^{(t)}) - \tilde{\bm{g}}_k (\bm{w}^{(t)}) +  \tilde{\bm{g}}_k (\bm{w}^{(t)})\right\|^2\right]\\
         & \leq  \frac{L\eta^2}{2}\mathbb{E}\left[ 2 \sum_{k=1}^K\frac{|\mathcal{D}_k|}{|\mathcal{D}| }  \left\| \hat{\bm{g}}_k (\bm{\hat{w}}_{k}^{(t)}) - \tilde{\bm{g}}_k (\bm{w}^{(t)}) \right\|^2 \right] \\
        & \quad +  \frac{L\eta^2}{2}\mathbb{E}\left[ 2 \sum_{k=1}^K\frac{|\mathcal{D}_k|}{|\mathcal{D}| } \left\| \tilde{\bm{g}}_k (\bm{w}^{(t)}) \right\|^2 \right].\\
    \end{aligned}
\end{equation}
Then, taking variance bound gradient \eqref{equ:variance} into 
\eqref{2}, we obtain
\begin{equation} \label{3}
    \begin{aligned}
        & \frac{L\eta^2}{2}\mathbb{E}\left[ \left\| \sum_{k=1}^K\frac{|\mathcal{D}_k|}{|\mathcal{D}| }\hat{\bm{g}}_k (\bm{\hat{w}}_{k}^{(t)}) \right\|^2\right] \\
         & \leq  L\eta^2 \sum_{k=1}^K \frac{|\mathcal{D}_k|}{|\mathcal{D}|} A^2G^2 \frac{\gamma_{k,t}}{1-\gamma_{k,t}}\\
         & \quad + L\eta^2\mathbb{E}\left[ \sum_{k=1}^K\frac{|\mathcal{D}_k|}{|\mathcal{D}| } \left\| \tilde{\bm{g}}_k (\bm{w}^{(t)}) \right\|^2 \right].\\
    \end{aligned}
\end{equation}
According to Assumption \ref{assumption6} and deploying Peter-Paul inequality, \eqref{3} is derived as
\begin{equation} \label{equ:basic2}
    \begin{aligned}
            & \frac{L\eta^2}{2}\mathbb{E}\left[ \left\| \sum_{k=1}^K\frac{|\mathcal{D}_k|}{|\mathcal{D}| }\hat{\bm{g}}_k (\bm{\hat{w}}_{k}^{(t)}) \right\|^2\right] \\
            & \leq  L\eta^2 \sum_{k=1}^K \frac{|\mathcal{D}_k|}{|\mathcal{D}|} A^2G^2 \frac{\gamma_{k,t}}{1-\gamma_{k,t}} \\
            &\quad + 2L\eta^2 \mathbb{E}\left[  \sum_{k=1}^K\frac{|\mathcal{D}_k|}{|\mathcal{D}| } \left\| \bm{g} (\bm{w}^{(t)}) \right\|^2 \right]\\
            & \quad + 2L\eta^2 \mathbb{E}\left[  \sum_{k=1}^K\frac{|\mathcal{D}_k|}{|\mathcal{D}| } \left\| \tilde{\bm{g}}_k (\bm{w}^{(t)}) - \bm{g} (\bm{w}^{(t)}) \right\|^2 \right] \\           
            & \leq L\eta^2 \sum_{k=1}^K \frac{|\mathcal{D}_k|}{|\mathcal{D}|} A^2G^2 \frac{\gamma_{k,t}}{1-\gamma_{k,t}} \\
            & \quad + 2L\eta^2  \|\bm{g} (\bm{w}^{(t)}) \|^2 + 2LK\eta^2  \frac{\sigma^2}{|\mathcal{D}|}. \\
    \end{aligned}
\end{equation}
Substituting \eqref{equ:expect_1} and \eqref{equ:basic2} into \eqref{equ:Exp}, we obtain 
\begin{equation} \label{equ:basic3}
    \begin{aligned}
            & \mathbb{E} \left[\mathcal{F}\left(\bm{w}^{(t+1)}\right)-\mathcal{F}\left(\bm{w}^{(t)}\right) \right] \\
            & \leq (-\eta + 2L\eta^2) \|\bm{g} (\bm{w}^{(t)}) \|^2 + 2LK\eta^2 \frac{\sigma^2}{|\mathcal{D}|} \\
            & \quad + L\eta^2 \sum_{k=1}^K \frac{|\mathcal{D}_k|}{|\mathcal{D}|} A^2G^2 \frac{\gamma_{k,t}}{1-\gamma_{k,t}}.
    \end{aligned}
\end{equation}
Let the learning rate be $\eta=\frac{1}{3 \sqrt{T}L}$, 
we end the proof of Lemma \ref{lem:adaptive}.

\section{PROOF OF THEOREM \ref{theorem:adaptive}}\label{_proof_of_theorem:adaptive}
By rearranging \eqref{equ:lemma_opt}, we obtain 
\begin{equation} \label{equ:adap_re}
    \begin{aligned}
       & \left(\frac{1}{3\sqrt{T}L}-\frac{2}{9TL} \right)\|\bm{g}(\bm{w}^{(t)}) \|^2\\
        & \leq \mathbb{E} \left[\mathcal{F}\left(\bm{w}^{(t)}\right)-\mathcal{F}\left(\bm{w}^{(t+1)}\right) \right]\\
        & \quad + \frac{2K}{9TL} \frac{\sigma^2}{|\mathcal{D}|} + \frac{A^2G^2}{9TL} \sum_{k=1}^K \frac{|\mathcal{D}_k|}{|\mathcal{D}|} \cdot \frac{\gamma_{k,t}}{1-\gamma_{k,t}} .\\
    \end{aligned}
\end{equation}
Since $\sqrt{T} \leq T,\; T=1,2,\ldots$, the left side of \eqref{equ:adap_re} is scaled as 
$$\left(\frac{1}{3\sqrt{T}L}-\frac{2}{9\sqrt{T}L} \right)\|\bm{g}(\bm{w}^{(t)}) \|^2 = \frac{1}{9\sqrt{T}L}\|\bm{g}(\bm{w}^{(t)}) \|^2 ,$$
then, multiply $9\sqrt{T}L$ on both sides, we obtain 
\begin{equation}\label{Eq:Convergence1} 
    \begin{aligned}
        & \|\bm{g} (\bm{w}^{(t)}) \|^2 
         \leq 9\sqrt{T}L\left[\mathcal{F}\left(\bm{w}^{(t)}\right) -\mathcal{F}\left(\bm{w}^{(t+1)}\right) \right] \\
         & \quad +\frac{2K}{\sqrt{T}}   \frac{\sigma^2}{|\mathcal{D}|}   + \frac{A^2G^2}{\sqrt{T}} \sum_{k=1}^K \frac{|\mathcal{D}_k|}{|\mathcal{D}|}  \frac{\gamma_{k,t}}{1-\gamma_{k,t}}.\\
    \end{aligned}
\end{equation}
Next, average both sides across all communication rounds $t=0, 1, \ldots, T-1$ obtains 
\begin{equation} 
    \begin{aligned}
        & \frac{1}{T}\sum_{t=0}^{T-1} \|\bm{g} (\bm{w}^{(t)}) \|^2 
         \leq \frac{9L}{\sqrt{T}}  \left[\mathcal{F}\left(\bm{w}^{(0)}\right) -\mathcal{F}\left(\bm{w}^{(T)}\right) \right] \\
        &  \quad + \frac{2K}{\sqrt{T}}   \frac{\sigma^2}{|\mathcal{D}|} +  \frac{A^2G^2}{\sqrt{T}}\frac{1}{T} \sum\limits_{t=0}^{T-1}\sum_{k=1}^K \frac{|\mathcal{D}_k|}{|\mathcal{D}|} \frac{\gamma_{k,t}}{1-\gamma_{k,t}}.
    \end{aligned}
\end{equation}
As the number of rounds goes to infinity, 
we end the proof of Theorem \ref{theorem:adaptive}.

\bibliographystyle{IEEEtran}
\bibliography{refs}

\end{document}